 \documentclass[sigconf]{acmart}
\AtBeginDocument{%
  }

\newcommand{\mytheoremname}{\bfseries Theorem}
\newcommand{\mylemmaname}{\bfseries Lemma}
\newcommand{\mydefinitionname}{\bfseries Definition}
\newcommand{\mypropositionname}{\bfseries Proposition}
\newcommand{\myremarkname}{\bfseries Remark}

\newtheorem{theorem}{\mytheoremname}
\newtheorem{lemma}[theorem]{\mylemmaname}
\newtheorem{definition}[theorem]{\mydefinitionname} 
\newtheorem{proposition}[theorem]{\mypropositionname}
\newtheorem{remark}[theorem]{\myremarkname}

\usepackage{hyperref}
\newcommand{\zwc}[1]{{\color{black} #1}}

\usepackage[
type={CC},
modifier={by-nc},
version={4.0},
]{doclicense}
\usepackage{caption}
\usepackage{subcaption}
\makeatletter
\gdef\@copyrightpermission{
  \begin{minipage}{0.2\columnwidth}
   \href{https://creativecommons.org/licenses/by/4.0/}{\includegraphics[width=0.90\textwidth]{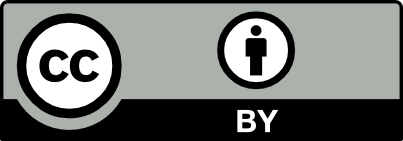}}
  \end{minipage}\hfill
  \begin{minipage}{0.8\columnwidth}
   \href{https://creativecommons.org/licenses/by/4.0/}{This work is licensed under a Creative Commons Attribution International 4.0 License.}
  \end{minipage}
  \vspace{5pt}
}
\makeatother
\copyrightyear{2025} 
\acmYear{2025} 
\setcopyright{rightsretained}
\acmConference[KDD '25]{Proceedings of the 31st ACM SIGKDD Conference on Knowledge Discovery and Data Mining V.1}{August 3--7, 2025}{Toronto, ON, Canada}
\acmBooktitle{Proceedings of the 31st ACM SIGKDD Conference on Knowledge Discovery and Data Mining V.1 (KDD '25), August 3--7, 2025, Toronto, ON, Canada}
\acmDOI{10.1145/3690624.3709324}
\acmISBN{979-8-4007-1245-6/25/08}
\begin{document}

\title{Understanding Oversmoothing in Diffusion-Based GNNs From the Perspective of Operator Semigroup Theory}

\author{Weichen Zhao}

\affiliation{%
  \institution{The School of Statistics and Data Science, LPMC \& KLMDASR, Nankai University}
  \city{Tianjin}
  \country{China}
}
\email{zhaoweichen@nankai.edu.cn}

\author{Chenguang Wang}
\affiliation{%
  \institution{The School of Data Science, \\ The Chinese University of Hong Kong, Shenzhen}
  \city{Shenzhen}
  \country{China}}
\email{chenguangwang@link.cuhk.edu.cn}

\author{Xinyan Wang}
\affiliation{%
  \institution{The Academy of Mathematics and Systems Science, Chinese Academy of Sciences}
  \city{Beijing}
  \country{China}
}
\email{wangxinyan21a@mails.ucas.ac.cn}

\author{Congying Han}
\authornote{Corresponding author}
\affiliation{%
 \institution{School of Mathematical Sciences, University of Chinese Academy of Sciences}
  \city{Beijing}
  \country{China}
  }
\email{hancy@ucas.ac.cn}

\author{Tiande Guo}
\affiliation{%
 \institution{School of Mathematical Sciences, University of Chinese Academy of Sciences}
  \city{Beijing}
  \country{China}}
\email{tdguo@ucas.ac.cn}

\author{Tianshu Yu}
\affiliation{%
  \institution{The School of Data Science, \\ The Chinese University of Hong Kong, Shenzhen}
  \city{Shenzhen}
  \country{China}}
\email{yutianshu@cuhk.edu.cn}
\renewcommand{\shortauthors}{Weichen Zhao et al.}



\begin{abstract}
This paper presents an analytical study of the oversmoothing issue in diffusion-based Graph Neural Networks (GNNs). Generalizing beyond extant approaches grounded in random walk analysis or particle systems, we approach this problem through operator semigroup theory.
        This theoretical framework allows us to rigorously prove that oversmoothing is intrinsically linked to the ergodicity of the diffusion operator. Relying on semigroup method, we can quantitatively analyze the dynamic of graph diffusion and give a specific mathematical form of the smoothing feature by ergodicity and invariant measure of operator, which improves previous works only show existence of oversmoothing. This finding further poses a general and mild ergodicity-breaking condition, encompassing the various specific solutions previously offered, thereby presenting a more universal and theoretically grounded approach to relieve oversmoothing in diffusion-based GNNs. Additionally, we offer a probabilistic interpretation of our theory, forging a link with prior works and broadening the theoretical horizon. 
        Our experimental results reveal that this ergodicity-breaking term effectively mitigates oversmoothing measured by Dirichlet energy, and simultaneously enhances performance in node classification tasks. 
\end{abstract}

%
%
\begin{CCSXML}
<ccs2012>
   <concept>
       <concept_id>10002950.10003714.10003736</concept_id>
       <concept_desc>Mathematics of computing~Functional analysis</concept_desc>
       <concept_significance>300</concept_significance>
       </concept>
   <concept>
       <concept_id>10002950.10003648.10003700</concept_id>
       <concept_desc>Mathematics of computing~Stochastic processes</concept_desc>
       <concept_significance>300</concept_significance>
       </concept>
   <concept>
       <concept_id>10010147.10010257</concept_id>
       <concept_desc>Computing methodologies~Machine learning</concept_desc>
       <concept_significance>500</concept_significance>
       </concept>
 </ccs2012>
\end{CCSXML}

\ccsdesc[300]{Mathematics of computing~Functional analysis}
\ccsdesc[300]{Mathematics of computing~Stochastic processes}
\ccsdesc[500]{Computing methodologies~Machine learning}



\keywords{Graph neural networks, Oversmoothing, Operator Semigroup}


\maketitle
\section{Introduction}\label{sec.1}
Graph Neural Networks (GNNs) have emerged as a powerful tool for learning graph-structured data, finding applications in various domains such as materials science~\cite{merchant2023scaling}, bioinformatics~\cite{zhang2023protein, Dual_KDD23}, and recommendation systems~\cite{qin2024learning}. In recent years, continuous GNNs~\cite{xhonneux2020continuous} are proposed to generalize previous graph neural networks with discrete dynamics to the continuous domain by Neural ODE~\cite{chen2018neural}.
Graph diffusion \cite{chamberlain2021grand,song2022robustness,choi2023gread} further extended the message-passing mechanism in classic GNNs under a partial differential equation (PDE) perspective. These works have made significant progress in terms of interpretability, stability, heterogeneous graphs, and beyond.

Despite these advances, a fundamental challenge of GNNs lies in the phenomenon of oversmoothing \cite{loss_exponentially,wu2024demystifying, improving_KDD23}, where repetitions of message passing may cause node representations to become indistinguishable and thus lose their discriminative power. For GNNs with discrete dynamics, several works \cite{jumping,pairnorm,simple_deep,rusch2022gradient} are proposed to relieve the oversmoothing issues. Additionally, recent works \cite{oscillator,thorpe2022grand++, wang2022acmp} have verified the existence of oversmoothing in GNNs with continuous dynamics and most of them address this issue by introducing additional terms in graph diffusion equations, such as source term \cite{thorpe2022grand++}, Allen-Cahn term \cite{wang2022acmp}, and reaction term \cite{choi2023gread}.
However, such extra terms to graph diffusion are often under specific physical scenarios without a generic and unified overview, resulting in case-specific solutions with narrow applicability.

In this paper, we propose a unified framework using operator semigroup theory to perform a principled analysis, then address this limitation. By viewing node features as solutions to the Cauchy problem associated with linear graph diffusion, we provide an in-depth understanding of the connection between ergodicity of operator and oversmoothing. 
In contrast to previous work studying oversmoothing through smooth measures such as Dirichlet energy~\cite{MaskeyPBK23,wu2024demystifying}, with operator semigroups we can not only confirm the existence of oversmoothing, but also make it clear that the speed of oversmoothing is exponential and dependent on the spectral gap of the operator. Further, the operator semigroup approach gives a specific form for the fixed point, which improves the work that merely know that the fixed point exists. We also discuss the nonlinear case through the weak ergodicity of the nonlinear graph diffusion operator. 
Building on this foundation, we propose a general and mild ergodicity-breaking condition, which accommodates specific solutions from previous research and further offers a more universal rule for designing terms to mitigate oversmoothing in diffusion-based graph neural networks.

Moreover, we supplement our theoretical contributions with a probabilistic interpretation by studying the Markov process in which the generator is a graph diffusion operator, thus establishing a comprehensive link with existing literature. Furthermore, we construct the killing process for graph diffusion which provides an intuitive probabilistic connection for the proposed ergodicity-breaking condition.

Our experimental results confirm the effectiveness of our theoretical results, demonstrating reduced oversmoothing, as evidenced by higher Dirichlet energy, and improved node classification performance. 

In summary, this paper makes several key contributions to the study of oversmoothing problem: (1) We introduce a comprehensive framework based on operator semigroup theory to analyze the oversmoothing issue in diffusion-based graph neural networks. From this framework, specific mathematical form of the smoothing feature and rate of convergence can be given by the quantity associated with graph diffusion operator; 
(2) Our work proposes an ergodicity-breaking condition that not only addresses the oversmoothing problem but also encompasses several specific extra terms identified in prior works, demonstrating its broad applicability;
(3) We provide a probabilistic interpretation of our method, thereby establishing a connection with previous theoretical analyses and enriching the overall understanding of diffusion-based graph neural networks dynamics;
(4) We substantiate our theoretical results through synthetic and real-world experiments.

\subsection{Related Work}
\textbf{Diffusion-based graph neural networks.} Treating GNNs as the discretization of the continuous dynamical system is a rapidly growing sub-field of graph representation learning \cite{chamberlain2021grand,BehmaneshKO23}. Since the message passing (MP) mechanism shows an intrinsic link to the diffusion process, several diffusion-based graph neural networks \cite{PDE-GCN, plaplacian, song2022robustness} are proposed and conducted using ODE solver \cite{chen2018neural}. GRAND \cite{chamberlain2021grand} parameterizes the underlying graph diffusion process to learn the node representations. BLEND \cite{Beltrami} considers graph as the discretization of a manifold and jointly conducts continuous feature learning and topology evolution based on the Beltrami flow.

\noindent\textbf{Oversmoothing.}
Oversmoothing refers to the effect that node features of graph neural networks tend to become more similar with the increase
of the network depth, constraining the model expressive power for many graph neural networks \cite{thorpe2022grand++, loss_exponentially}. Many previous graph neural networks aim at overcoming oversmoothing \cite{jumping, simple_deep, pairnorm}. GRAND++  \cite{thorpe2022grand++} relieves the oversmoothing problem by introducing the source term. Several GNNs quantitatively tackle the oversmoothing problem by analyzing Dirichlet energy \cite{oscillator, rusch2022gradient}. Based on the Allen-Cahn particle system with repulsive force, ACMP \cite{wang2022acmp} shows adaption for node classification tasks with high homophily difficulty. GREAD \cite{choi2023gread} introduces the reaction diffusion equation, encompassing different types of popular reaction equations, and empirically mitigating oversmoothing. FROND~\cite{Kang24} interprets the feature updating process on graphs as a non-Markovian random walk which converges to the stationary distribution at algebraic rate thus relieving oversmoothing. Our method provides a more general framework and a more theoretically grounded method to address the oversmoothing problem.
\begin{figure}
  \includegraphics[width=.48\textwidth]{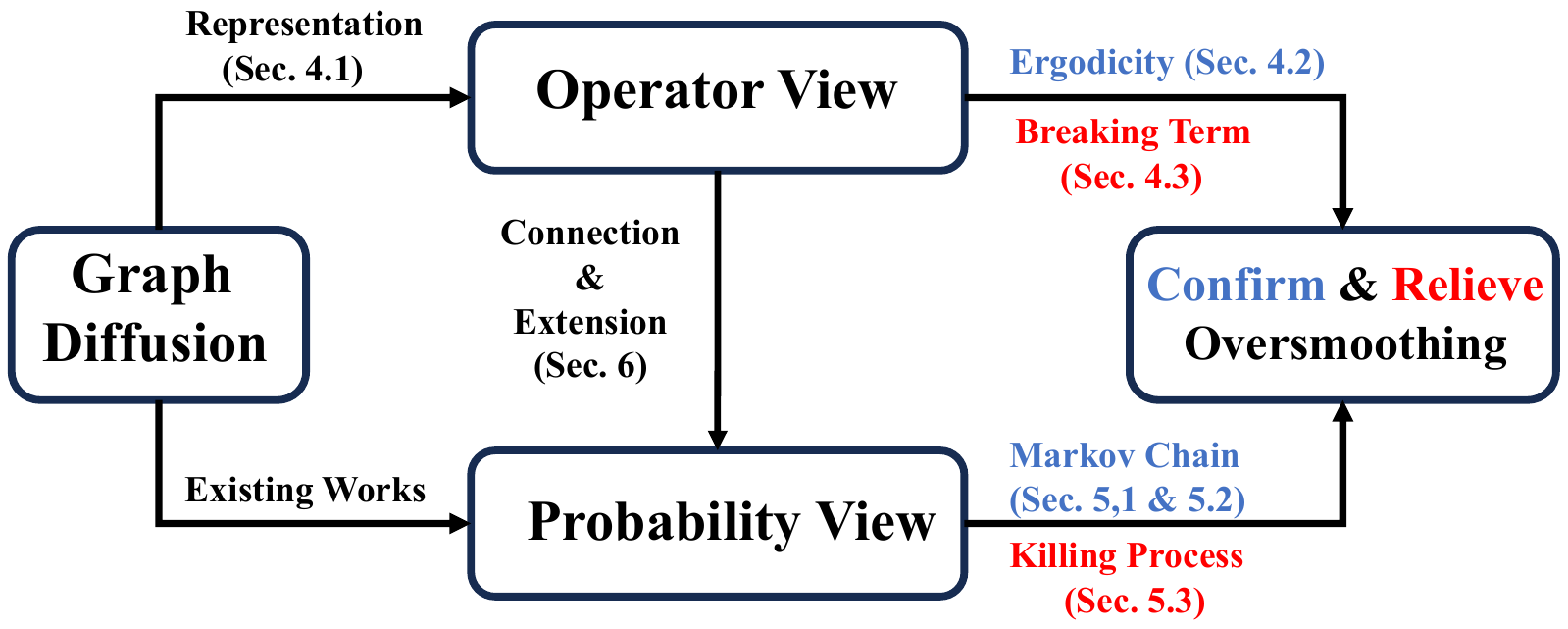}
  \caption{Paper Organization}
  \label{fig:organization}
\end{figure}

\subsection{Organization} 
In Sec. \ref{sec.2}, we provides a succinct introduction to the concepts of graph diffusion and oversmoothing. In Sec. \ref{sec.3}, we delve into the operator semigroups and generators of graph diffusion, which are pivotal mathematical constructs for modeling the dynamics of diffusion-based GNNs. Sec. \ref{sec.4} explores oversmoothing from the viewpoint of operator semigroups. In particular, Theorem \ref{thm:ergo_graph} in Sec. \ref{sec.ergodicity} elucidates the link between oversmoothing and the ergodic properties of the generator. Building on this foundation, Theorem \ref{theorem.extra term} in Sec. \ref{sec.4.3} proposes a novel approach to mitigate oversmoothing. 
Sec. \ref{sec.5} enhances the discussions in Sec. \ref{sec.4} by providing a complementary probabilistic perspective as the connection and generalization on previous works. Theorem \ref{thm5.3} validates that the strategy introduced in Theorem \ref{theorem.extra term} possesses a coherent probabilistic interpretation.
The organization of the main results of this work is outlined in Figure \ref{fig:organization}.

\section{Background}\label{sec.2}
        \textbf{Notations. } Let $ \mathcal{G}=(\mathcal{V},\mathcal{E}) $ be a graph, node be $ u\in\mathcal{V} $, edge be $ (u,v)\in\mathcal{E} $, $ W $ and $ D $ be adjacency matrix and degree matrix of graph $ \mathcal{G} $ respectively, $\mathcal{N}(u)$ be neighbors of node $ u\in\mathcal{V} $. Let $X
        \in \mathbb{R}^{n\times d}$ be the nodes features, where each row $ X(u)\in\mathbb{R}^d$ corresponds to the feature of node $u$.

       \noindent\textbf{Graph Neural Networks.} GNNs have emerged as a powerful framework for learning representations of graph-structured data. Message passing neural networks (MPNNs) \cite{gilmer2017neural}  
        incorperates two stages: aggregating stage and updating stage. The core idea behind MP is the propagation of information across the graph. Inspired by MPNNs, many graph neural networks typically operate by iteratively aggregating and updating node-level features based on the features of neighboring nodes. 
        GCNs~\cite{kipf2016semi} and GATs~\cite{velivckovic2017graph} are two prominent representatives in the field. GCNs leverage graph convolutional layers to capture and model relational information.
         GATs introduce the attention mechanism, allowing nodes to selectively attend to their neighbors during information propagation.

    \noindent\textbf{Graph Diffusion. }Neural Ordinary Diffusion Equations (Neural ODEs) \cite{chen2018neural} present an innovative approach in constructing continuous-time neural network, where the derivative of the hidden state is parametrized using the neural networks.
        \citet{chamberlain2021grand} extends this framework to construct continuous-time graph neural networks.
        Graph diffusion is the process through which features propagate across the nodes of a graph. More formally, the graph diffusion on a graph $\mathcal{G}$ is defined as follows:
        \begin{equation} \label{eq.diff_high dim}
            \frac{\partial H(t)}{\partial t}:= [A(H(t))-I]H(t),
        \end{equation}    	
    	\begin{equation}\label{eq.1}
        	\frac{\partial h(u,t)}{\partial t}:=\sum_{v \in \mathcal{V}} a(h(u,t),h(v,t))(h(v,t)-h(u,t)),
    	\end{equation}
        where $I$ is identity matrix, $A(H(t))=(a(h(u,t),h(v,t)))_{u,v \in \mathcal{V}}$ is the attention matrix~\cite{velivckovic2017graph}, where
        \begin{equation}\label{attention_matrix}
            a(h(u,t),h(v,t))=\frac{\exp{(b^{(t)}_{u,v}})}{\sum_{w \in \mathcal{N}(u)} \exp{(b^{(t)}_{u,w})}},\; (u,v) \in \mathcal{E}, 
        \end{equation}
        satisfying $\sum_{v\in \mathcal{V}} a(h(u,t),h(v,t))=1,\;\forall u$, where $\Phi: \mathbb{R}^d\times \mathbb{R}^d\rightarrow \mathbb{R}$ is \emph{attention function} to compute the weight between $u$ and $v$,
        $b^{(t)}_{u,v}:=\Phi(h(u,t), h(v,t))$. When $A(H(t))$ solely relies on initial features, implying $A(H(t))=A$, Eq.~\eqref{eq.1} is in the linear form:
        \begin{equation}\label{eq.graph diff linear}
            \frac{\partial h(u,t)}{\partial t}:=\sum_{v \in \mathcal{V}} a(u,v)(h(v,t)-h(u,t)).
        \end{equation}
        In this paper, we mainly concerned linear diffusion-based graph neural networks. Further discussion on nonlinear case is in section \ref{sec.ergodicity}. Moreover, by introducing a time discretization scheme, we can elucidate the forward propagation of the diffusion equation using the ODE solver. 
        It is worth emphasizing that many existing graph neural networks can be viewed as realization of the graph diffusion model by setting the feasible step and attention coefficients \cite{chamberlain2021grand}. 
        This linkage allows us to leverage the properties of ODEs to analyze a diverse range of graph neural network models.

   \textbf{}In graph neural networks, oversmoothing problem refers to the phenomenon where node features become excessively similar along 
    with the propagation process.
    In the context of graph diffusion, the oversmoothing problem can be defined as follows:
    \begin{definition}[The oversmoothing problem]
        There exists a constant vector $b\in\mathbb{R}^d$, such that $h(u,t) \rightarrow b$, as $t \rightarrow \infty$, for every $u \in \mathcal{V} $.
    \end{definition}
    We do not define oversmoothing here by a smoothness metric such as Dirichlet energy and emphasize exponential convergence because the necessity of exponential convergence as a condition for defining oversmoothing remains subject to debate. We opted not to adopt a definition that might spark controversy but instead expressed oversmoothing in mathematical forms based on the most fundamental intuitive understanding~\cite{thorpe2022grand++,nguyen2023revisiting,keriven2022not}. Indeed, exponential convergence can also be demonstrated from our proofs.

\section{Operator Semigroup and Generator of Graph Diffusion} \label{sec.3}
    In this section, we offer an extensive introduction to pertinent operator semigroup theory and associated theorems. These theoretical concepts are fundamental mathematical tools for the comprehensive analysis of the long time behavior of dynamical systems on graphs. 

    For $\mathcal{V} = \{1, 2, \ldots, n\}$, let 
     $f: \mathcal{V} \rightarrow \mathbb{R}$ defined on node set and $C(\mathcal{V})$ represent the set of all real-valued function on the node set. The operator is a mapping from function to function. 
     The \emph{operator semigroup} denoted as $\mathbf{S}=\{S 
     _t\}_{t\ge 0}$ comprises a family of operators with semigroup property: $S_r \circ S_t= S_t \circ S_r=S_{t+r}$, for $t, r \geq 0,$ and $S_0$ is an identity operator. 
    Given operator semigroup $S_t$, operator 
      \begin{equation}\label{def_generator}
          \mathcal{L}=\lim \limits_{r \to 0} \frac{1}{r} \left[S_r-\mathcal{I} \right]
      \end{equation}
      is called the \emph{infinitesimal generator} of $\mathbf{S}$, where $\mathcal{I}$ is the identity operator. $\mathcal{L}$ is uniquely determined given semigroup $S_t$ .The semigroup property of the operators $S_t, t \geq 0$ shows that for $t, r>0$,
    $
    \frac{1}{r}\left[S_{t+r}-S_t\right]=S_t\left(\frac{1}{r}\left[S_r-\mathcal{I}\right]\right)=\left(\frac{1}{r}\left[S_r-\mathcal{I}\right]\right) S_t .
    $
    Let $r \rightarrow 0$, 
    \begin{equation}\label{eq.Fokker}
    \frac{\partial S_t}{\partial t} = \mathcal{L}S_t = S_t\mathcal{L}.
    \end{equation}
    \zwc{If $S_t$ is a Markov semigroup, equation ~\eqref{eq.Fokker} is referred to as Kolmogorov's backward equation.}
    
    From the RHS of Eq.~\eqref{eq.graph diff linear}, we can derive an operator
    \begin{equation}\label{eq.2}
    \mathcal{A}f(u) := \sum_{v\in \mathcal{V}}a(u,v)(f(v)-f(u)),
    \end{equation}
    Then operator semigroup of graph diffusion induced by $\mathcal{A}$ can be well defined using Kolmogorov's equation,  denoted as $\mathbf{P}=\{P_t, t\ge 0\}$.  
        If $f$ represents a $d$-dimensional function, the operator $\mathcal{A}$ remains well-defined. It can be perceived as operating independently on each dimension. 
From the definition of graph diffusion, $\mathbf{P}$ can be shown to hold the following properties:   
    \begin{proposition}\label{prop.1}
        $\mathbf{P}=\{P_t, t\ge 0\}$ satisfies
    		\begin{itemize}
    			\item[(i)] $P_t 1=1$, where $1$ is a d-dimensional constant vector function, where each element is equal to 1.
    			\item[(ii)] If $f\ge 0$, then $P_t f\ge 0$.
    		\end{itemize}
	   \end{proposition}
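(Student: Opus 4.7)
The plan is to reduce both statements to concrete matrix-exponential computations, exploiting the fact that $\mathcal{V}$ is finite so the operator $\mathcal{A}$ of Eq.~\eqref{eq.2} can be identified with the $n\times n$ matrix $\mathcal{A} = A - I$, where $A=(a(u,v))_{u,v\in\mathcal{V}}$ is the (row-stochastic) attention matrix. By the Fokker--Planck equation Eq.~\eqref{eq.Fokker} and Remark~\ref{remark.1}, the semigroup is $P_t = e^{t\mathcal{A}} = e^{-t}\,e^{tA}$, acting on each coordinate of a $d$-dimensional vector function independently.

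For (i), I would first observe that $\mathcal{A}\mathbf{1}(u) = \sum_{v}a(u,v)(1-1) = 0$ for every $u$, simply because the telescoping structure in Eq.~\eqref{eq.2} annihilates constants (the row-stochasticity of $A$ is not even needed here). Applying Eq.~\eqref{eq.Fokker} to $f=\mathbf{1}$ gives $\partial_t(P_t\mathbf{1}) = P_t\mathcal{A}\mathbf{1} = 0$ together with $P_0\mathbf{1}=\mathbf{1}$, so $P_t\mathbf{1}\equiv\mathbf{1}$; extending coordinatewise yields the $d$-dimensional claim.

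For (ii), the cleanest route is the decomposition $P_t = e^{-t}\,e^{tA}$. Since the attention coefficients satisfy $a(u,v)\ge 0$, every entry of $tA$ is non-negative for $t\ge 0$, so every partial sum $\sum_{k=0}^{N}(tA)^k/k!$ is entrywise non-negative; passing to the limit, $e^{tA}$ is an entrywise non-negative matrix. Multiplication by the scalar $e^{-t}>0$ preserves this, so $P_t$ has non-negative entries. Consequently $f\ge 0$ (coordinatewise) implies $P_t f\ge 0$.

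Neither step presents a serious obstacle; the only mild subtlety is justifying that $\mathcal{A}$ genuinely acts as a bounded matrix so that $P_t = e^{t\mathcal{A}}$ is well defined and coincides with the semigroup delivered by Remark~\ref{remark.1}, but this is immediate once $|\mathcal{V}|<\infty$ is invoked. An alternative proof of (ii) via a maximum principle (if $g(t):=\min_u(P_t f)(u)$ drops below $0$ at some $t^{\star}$, differentiate and use $\mathcal{A}(P_{t^\star}f)(u^\star)\ge 0$ at an argmin) would also work, but I prefer the matrix-exponential argument because it dovetails naturally with the semigroup framework already set up in Sec.~\ref{sec.3}.
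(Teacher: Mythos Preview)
Your proof is correct and follows essentially the same skeleton as the paper's: both identify $P_t=e^{t(A-I)}$ and conclude (i)--(ii) from the fact that this matrix is stochastic. The paper packages this last step by observing that $Q:=A-I$ is a Q-matrix (off-diagonal entries non-negative, row sums zero) and then quoting the standard result from Norris that $Q$ is a Q-matrix if and only if $e^{tQ}$ is stochastic for all $t\ge 0$. You instead unpack that implication by hand: for (i) you use $\mathcal{A}\mathbf{1}=0$ and the Fokker--Planck ODE directly, and for (ii) you use the commuting factorisation $e^{t(A-I)}=e^{-t}e^{tA}$ together with entrywise non-negativity of $A$. Your route is therefore a self-contained, slightly more elementary version of the same argument; the paper's route is shorter on the page but relies on an external citation. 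Both are fine, and your decomposition $e^{-t}e^{tA}$ is in fact the standard way the cited Norris lemma is proved.
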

    Property (i) is commonly known as the Markov property, meaning that the graph diffusion semigroup $P_t$ can be classified as a Markov semigroup. Furthermore, $\mathbf{P}$ has a clear probabilistic interpretation, as $P_t$ acts as a transition function for a Markov chain. In the context of graph diffusion, $P_t$ represents an $n \times n$ transition probability matrix, where the row sums equal 1 and the transition probabilities between non-adjacent nodes are zero. For a more comprehensively probabilistic interpretation, please refer to Sec. \ref{sec.5}. 
    

	
\section{Operator View for Oversmoothing}\label{sec.4}
    In this section, we illustrate that the graph diffusion equation~\eqref{eq.graph diff linear}, can be formulated as the Cauchy problem, and the solution of the graph diffusion equation has an operator semigroup format. We further discern that oversmoothing problem arises from the ergodicity of operators. In specific, the hidden layer features $h(u,t)$ converge to a fixed point as $t\rightarrow\infty$, for all nodes $u \in \mathcal{V}$. To address this issue, we introduce a condition aimed at breaking ergodicity and hence relieving the oversmoothing problem, offering a more universal rule for designing terms to relieve oversmoothing in diffusion-based GNNs.

	\subsection{Operator Representation of Feature}\label{sec.4.1}
   The dynamic of GRAND~\citep{chamberlain2021grand} is given by the graph diffusion equation. Given node input features $ X(u)=[X_{1}(u),X_{2}(u),\ldots,X_{d}(u)], u\in\mathcal{V}. $  The node features $ h(u,t) $ are the solution to the following Cauchy problem
	\begin{equation}\label{eq.4}
	\begin{cases}\frac{\partial h_i(u,t)}{\partial t}:=\mathcal{A}h_i(u,t), \\h_i(u,0)=f_i(u).\end{cases}\quad \forall u \in \mathcal{V},
	\end{equation}
    where $f_i(u)=X_{i}(u),i=1,\ldots,d$ are initial values. The Kolmogorov's equation (\ref{eq.Fokker}) provides us with the solution to the Cauchy problem, leading to the following theorem:
    
    \begin{theorem}\label{prop:cauchy_solu}
      Let $\mathbf{P}=\{P_t\}_{t\ge 0}$ be the graph diffusion operator semigroup, then the solution to graph diffusion in Cauchy problem form (\ref{eq.4}) is given by:
    \begin{equation}
    h_i(u,t)=P_tf_i(u).
    \end{equation}
	   \end{theorem}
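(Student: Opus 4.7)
The plan is to verify that the proposed formula $h_i(u,t) := P_t f_i(u)$ satisfies both the evolution equation and the initial condition in (\ref{eq.4}), and then invoke uniqueness to conclude that this is \emph{the} solution. This is essentially the standard semigroup-theoretic recipe for solving a linear Cauchy problem, specialized to the finite graph setting.

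First, I would recall from Section \ref{sec.3} that $\mathbf{P}=\{P_t\}_{t\ge 0}$ is by construction the semigroup generated by $\mathcal{A}$, so the Fokker-Planck equation (\ref{eq.Fokker}) gives the operator identity $\partial_t P_t = \mathcal{A} P_t = P_t \mathcal{A}$. Applying both sides to the initial datum $f_i \in C(\mathcal{V})$ and evaluating at a node $u$ yields $\partial_t (P_t f_i)(u) = \mathcal{A}(P_t f_i)(u)$, which is exactly the PDE in (\ref{eq.4}). For the initial condition, the semigroup axiom $S_0 = \mathcal{I}$ immediately gives $P_0 f_i(u) = f_i(u)$, matching $h_i(u,0) = f_i(u)$.

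For uniqueness, I would use that $\mathcal{V}$ is finite, so $C(\mathcal{V}) \cong \mathbb{R}^n$ and $\mathcal{A}$ is a bounded linear operator on a finite-dimensional space; the Cauchy problem is therefore a linear system of ODEs with constant coefficients. Standard Picard--Lindel\"of uniqueness then ensures that any solution is determined by its initial value, so $h_i(u,t) = P_t f_i(u)$ is the unique solution. Vectorizing across the $d$ feature coordinates is trivial since $\mathcal{A}$ acts independently on each dimension, as already noted after Eq.~(\ref{eq.2}).

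The proof contains no real obstacle; the only point warranting care is transferring the operator identity $\partial_t P_t = \mathcal{A} P_t$ into a pointwise PDE on $C(\mathcal{V})$. In the finite-dimensional setting this is routine: all topologies on $\mathbb{R}^n$ coincide, so differentiation of $t \mapsto P_t f_i$ in the operator sense agrees with componentwise differentiation, and evaluation at $u$ commutes with $\partial_t$. Thus the argument reduces to bookkeeping, and Theorem \ref{prop:cauchy_solu} is a direct consequence of Remark \ref{remark.1} together with the definition of $\mathbf{P}$.
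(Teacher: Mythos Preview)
Your proposal is correct and follows exactly the approach the paper indicates: the paper does not give a detailed proof of this theorem but simply states that ``The Fokker-Planck Equation (\ref{eq.Fokker}) provides us with the solution to the Cauchy problem,'' which is precisely the mechanism you spell out. Your argument is a faithful and more explicit unpacking of that one-line justification, including the uniqueness step via finite-dimensionality that the paper leaves implicit (cf.\ Remark \ref{remark.1}).
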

    The result in one dimension can naturally extend to multiple dimensions:
        \begin{equation}\label{eq.op_repre}
            \begin{cases}
                h(u,t)=P_t f(u),\\ f(u)=X(u),          
            \end{cases}
            \quad \text{for } u \in \mathcal{V},
        \end{equation}
     Discussion in section \ref{sec.ergodicity} shows the advantage of this form becomes, as we can derive specific form of the smoothing feature by ergodicity of operator.
    
	\subsection{Ergodicity and Oversmoothing}\label{sec.ergodicity}
    

     We say that a measure $ \pi $ on measurable space $ (\mathcal{V},\mathscr{S}) $ is \emph{invariant} with respect to operator $ \mathcal{L} $ if for every bounded measurable function $ f $ satisfying
	\begin{equation}               
	\sum_{u\in\mathcal{V}}\mathcal{L}f(u)\pi(u)=0.       
	\end{equation} 
 
        We then define the ergodicity of generator $  \mathcal{L} $ as follows:
    	\begin{definition}[Ergodicity of $ \mathcal{L} $]
    		 The generator $  \mathcal{L} $ is said to be ergodic if every $ f  $ such that $ \mathcal{L}f = 0 $ is constant.
    	\end{definition}

     Next, we proceed to elaborate in the context of graph diffusion. It can be verified that the generator of graph diffusion $\mathcal{A}$ is ergodic, leading to the following theorem. 
    
	\begin{theorem}\label{thm:ergo_graph}
	    For a connected and non-bipartite graph $\mathcal{G}$, the operator 
		$ \mathcal{A} $ is ergodic.
	\end{theorem}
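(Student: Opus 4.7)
The plan is to establish the theorem by a discrete maximum-principle argument, which is essentially the Perron--Frobenius characterization of the kernel of $I - A$ for an irreducible row-stochastic matrix. First, the equation $\mathcal{A}f = 0$, combined with the normalization $\sum_{v} a(u,v) = 1$ built into the definition of graph diffusion, will be rewritten as the discrete mean-value property
\[
f(u) = \sum_{v \in \mathcal{V}} a(u,v)\, f(v), \qquad \forall u \in \mathcal{V}.
\]
Hence the claim reduces to showing that the only solutions of this averaging equation on a connected graph are constants.

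\textbf{Key steps.} Since $\mathcal{V}$ is finite, I would set $M = \max_{u \in \mathcal{V}} f(u)$ and consider the nonempty level set $S = \{u : f(u) = M\}$. For any $u_0 \in S$, the mean-value identity $M = \sum_{v} a(u_0,v) f(v)$, combined with $f(v) \le M$, $a(u_0,v) \ge 0$, and $\sum_{v} a(u_0,v) = 1$, forces equality in a convex combination and hence $f(v) = M$ whenever $a(u_0,v) > 0$. Under the standing convention that $a(u,v) > 0$ precisely when $(u,v) \in \mathcal{E}$, this shows $S$ is closed under taking $\mathcal{G}$-neighbors. Connectedness of $\mathcal{G}$ then propagates $S$ to all of $\mathcal{V}$, so $f \equiv M$ is constant, establishing that $\ker \mathcal{A}$ consists solely of constant functions.

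\textbf{Main obstacle.} The maximum-principle argument itself is elementary, so the only real subtlety will be accounting for the non-bipartite hypothesis in the statement, which the proof above does not use: connectedness of $\mathcal{G}$ alone suffices to characterize $\ker \mathcal{A}$. The hard part, conceptually, is that non-bipartiteness is a property about the spectrum of $A$ away from the eigenvalue $1$ (it excludes $-1$), not about its $1$-eigenspace. I expect the non-bipartite condition is placed in the hypothesis with later uses of ergodicity in mind, namely in combination with the semigroup $\{P_t\}$ from Theorem~\ref{prop:cauchy_solu} when one wants $P_t f$ to actually converge to a constant rather than merely have constants as its fixed points; aperiodicity is then what rules out surviving oscillatory modes. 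I would therefore carry out the proof under only the connectedness assumption and flag the redundancy of the non-bipartite condition, for this particular statement, in a brief remark.
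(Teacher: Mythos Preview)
Your maximum-principle argument is correct and complete: the rewriting $f(u)=\sum_v a(u,v)f(v)$, the propagation of the level set $S=\{u:f(u)=M\}$ through neighbors, and the use of connectedness are all sound, and you are right that non-bipartiteness plays no role in characterizing $\ker\mathcal{A}$.

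The paper, however, takes a different route. It first invokes Perron--Frobenius for the irreducible matrix $A$ to assert that the eigenvalue $\lambda_0=0$ of $\mathcal{A}$ is simple, and then computes the Dirichlet form
\[
\langle \mathcal{A}f,f\rangle_{\mu}
\;=\;-\tfrac{1}{2}\sum_{u,v}a(u,v)\bigl(f(u)-f(v)\bigr)^{2}\mu(u),
\]
using the detailed-balance relation $a(u,v)\mu(u)=a(v,u)\mu(v)$; vanishing of this quadratic form for $\mathcal{A}f=0$ then forces $f(u)=f(v)$ whenever $a(u,v)>0$, and connectedness finishes. So the paper's argument is an energy (variational) argument that presupposes self-adjointness of $\mathcal{A}$ in $L^2(\mu)$, whereas your combinatorial maximum principle needs only row-stochasticity and irreducibility of $A$---no symmetry or invariant measure enters. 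In that sense your proof is both more elementary and strictly more general; the paper's approach, on the other hand, sets up the spectral machinery (eigenvalues, inner product, projections) that is reused verbatim in the proof of Theorem~\ref{thm.graph_ergodicity_conv}. Your observation about the redundancy of the non-bipartite hypothesis is accurate for this statement, and indeed the paper's own proof does not invoke it either; it becomes relevant only downstream, for convergence of $P_t$.
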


	A detailed proof is provided in Appendix \ref{appendix.4.3}. From this theorem, it can be seen that the ergodicity of operator is dependent on the topology property of the graph. Notice that $0\leq a(u,v)<1$, from Perron-Frobenius theorem (Theorem 8.4.4 of \cite{horn2012matrix}), the invariant measure $\mu$ of $\mathcal{A}$ exists and is unique. Then the ergodicity of operator $\mathcal{A}$ ensures 
     $P_t f(u)$ converges to the same constant for every $u \in \mathcal{V}$, as time $t \rightarrow \infty$. This finding is the cause of oversmoothing as different nodes share the same convergence. Under the same assumption in Theorem \ref{thm:ergo_graph}, above discussion can be summarized into the following theorem. Proof is provided in Appendix \ref{appendix.4.4}.
    

	\begin{theorem}\label{thm.graph_ergodicity_conv}
		  Let $P_t$ be the operator semigroup with infinitesimal generator $\mathcal{A}$, and $f(u)$ be the initial value mentioned in Eq.~\eqref{eq.op_repre}.
            then as $ t\rightarrow\infty $,
		\begin{equation}\label{eq.11}
			P_tf(u)\rightarrow\sum_{v\in \mathcal{V}}f(v) \mu(v)
		\end{equation}
  		where $ \mu $ is invariant for semigroup $ \mathbf{P}=\{P_t\}_{t\ge 0} $ with generator $\mathcal{A}$.
	\end{theorem}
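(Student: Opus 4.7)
The plan is to exploit the finite-dimensional matrix representation of the problem: with $|\mathcal{V}|=n$, the generator $\mathcal{A}$ is realized as the $n\times n$ matrix $A-I$, where $A=(a(u,v))_{u,v}$ is a row-stochastic matrix, and the associated Markov semigroup takes the form $P_t=e^{t\mathcal{A}}$. I would analyze the long-time behavior of this matrix exponential via its spectrum.

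First, I would establish a spectral gap for $\mathcal{A}$. Under the connected, non-bipartite assumption, the Perron--Frobenius theorem (invoked as Lemma \ref{lemmaA1} in the appendix) guarantees that $1$ is a simple eigenvalue of $A$ and every other eigenvalue $\lambda$ of $A$ satisfies $|\lambda|<1$. Translating through the shift $\mathcal{A}=A-I$, the value $0$ is a simple eigenvalue of $\mathcal{A}$ and every other eigenvalue has strictly negative real part. Writing $\mathcal{A}$ in Jordan form, $P_t=e^{t\mathcal{A}}$ decomposes as $P_t=\Pi+R(t)$, where $\Pi$ is the spectral projector onto $\ker\mathcal{A}$ along the complementary invariant subspace and $\|R(t)\|$ decays exponentially. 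Hence $P_t f(u)\to (\Pi f)(u)$ as $t\to\infty$.

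Next, I would identify the limit $\Pi f$. Theorem \ref{thm:ergo_graph} asserts that $\ker\mathcal{A}$ consists solely of constant functions, so $\Pi f=c(f)\cdot\mathbf{1}$ for some scalar $c(f)$ that is linear in $f$. To pin down the constant, I would use the invariance of $\mu$: the defining identity $\sum_u \mathcal{A}g(u)\mu(u)=0$ reads $\mu^{\top}\mathcal{A}=0$, and combining this with the Fokker--Planck equation \eqref{eq.Fokker} gives $\mu^{\top}P_t=\mu^{\top}$ for all $t\ge 0$. Passing to the limit in $\sum_u (P_t f)(u)\mu(u)=\sum_u f(u)\mu(u)$ yields $c(f)\sum_u\mu(u)=\sum_u f(u)\mu(u)$, and with $\mu$ normalized so that $\sum_u \mu(u)=1$ this delivers the desired $c(f)=\sum_v f(v)\mu(v)$.

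The main technical hurdle is establishing the strict spectral gap, i.e., that every eigenvalue of $\mathcal{A}$ other than $0$ has negative real part. This is exactly where the graph-theoretic hypotheses enter: connectedness supplies irreducibility and hence the simplicity and maximality of the Perron eigenvalue, while non-bipartiteness rules out peripheral eigenvalues of $A$ on the unit circle distinct from $1$. Once this spectral gap is in hand, the remainder of the argument reduces to routine linear algebra together with the elementary invariance identity $\mu^{\top}P_t=\mu^{\top}$, so no further obstruction arises.
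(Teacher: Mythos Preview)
Your proof is correct and takes a genuinely different route from the paper. The paper's argument assumes $\mathcal{A}$ is self-adjoint in $L^2(\mu)$ (a reversibility/detailed-balance hypothesis, with a separate symmetrization remark for the non-symmetric case), applies the spectral theorem to write $P_t f=\sum_k e^{\lambda_k t} f_{\lambda_k} e_{\lambda_k}$ with real eigenvalues and orthogonal eigenfunctions, and then lets $t\to\infty$. You instead work with the Jordan form of the possibly non-normal matrix $A-I$, which dispenses with any symmetry assumption and handles the general row-stochastic case directly. Your identification of the limiting constant via the conservation law $\mu^{\top}P_t=\mu^{\top}$ is also cleaner than the paper's, which deduces $f_{\lambda_0}e_{\lambda_0}=\sum_v f(v)\mu(v)$ from the fact that $\mathcal{A}$ annihilates this constant---a step that really rests on the projection being \emph{orthogonal} in $L^2(\mu)$, whereas your argument needs no inner-product structure at all. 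One small remark: your appeal to non-bipartiteness to rule out peripheral eigenvalues of $A$ is true but not actually needed here, since any eigenvalue $\lambda\neq 1$ of a stochastic matrix automatically has $\mathrm{Re}(\lambda)<1$, so $\mathcal{A}=A-I$ already has a spectral gap from connectedness alone.
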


 \begin{remark}
      From the proof of Theorem \ref{thm.graph_ergodicity_conv}, we not only demonstrate that oversmoothing arises in diffusion-based GNNs but also that the convergence rate is associated with the spectral gap of $\mathcal{A}$.
      Furthermore, the results of Theorem \ref{thm.graph_ergodicity_conv} elucidate the specific mathematical form of the fixed point of oversmoothing, namely, the initial node features averaged over the invariant measure $\sum_{v \in \mathcal{V}} f(v) \mu(v)$. This constitutes an advancement over prior work, which only established the exponential contraction of certain smooth measures.
 \end{remark}
 
 Since $ P_{t}f $ converges to the eigenspace corresponding to the eigenvalue $ \lambda_{0}=0 $, once the generator is not ergodic, the projection of function $ f\in\mathscr{L}^{2}(\mathcal{V},\mu) $ onto the eigenspace corresponding to the eigenvalue $ 0 $ is not constant. Therefore, $ \lim\limits_{t\rightarrow\infty}P_{t}f $ is not constant. That means non-oversmoothing. Using the notation from Eq.~\eqref{eq.op_repre} and combining with Theorem \ref{thm.graph_ergodicity_conv}, we can derive the following convergence:
     \begin{equation}
        h(u,t) \rightarrow \sum_{v \in \mathcal{V}} X_v \mu(v).
    \end{equation}
    This provides a fundamental mathematical explanation for the oversmoothing in diffusion-based GNNs. The underlying reasons can be attributed to the ergodicity of the operator. Consequently, to relieve oversmoothing, it is natural to break ergodicity of operator, as shown in the section \ref{sec.4.3}.
    
 \vspace{0.5em}
\noindent\textbf{Further Discussion on Nonlinear Case.} In addition to linear diffusion we further explore the more general case of over-smoothing in nonlinear diffusion models. We discuss the connection between weak ergodicity of operators and over-smoothing in nonlinear case.

The nonlinear diffusion-based GNNs is defined as follows
\begin{equation*} 
    \frac{\partial H(t)}{\partial t}:= [A(H(t))-I]H(t).
\end{equation*} 
After time discretization can be written as
$ H_{t+1} = A(H_{t}) H_{t}=A(H_{t})\cdots A(H_{0})H_{0}, \; t=0,1,2,\ldots $
where $H_0=X$. In this subsection we denote operator 
\begin{equation}\label{eq.discrete_P}
    P_t:=A(H_{t})\cdots A(H_{0}).
\end{equation} For all $t=0,1,2,\ldots$, $A(H_{t})$ is a stochastic matrix. Thus the family of operators $\{P_t\}, t=0,1,2,\ldots$ are the time inhomogeneous Markov operators, satisfies
\begin{equation}\label{eq.14}
    h_t(u)=P_tf(u)=\sum_{v\in\mathcal{V}}P_t(u,v)f(v),\quad u\in\mathcal{V}
\end{equation}
where $f(u):=x(u)=h_0(u)$. 

The research on the ergodicity of time inhomogeneous Markov operator is an important topic in probability~\cite{douc2004quantitative,saloff2009merging}. The following theorem shows the \emph{weak ergodicity} of the operator $P_t$ in \eqref{eq.discrete_P}. Proof is provided in Appendix \ref{sec.nonlinear}.

\begin{theorem}\label{thm_nonlinear}
    If The graph $\mathcal{G}$ is connected, non-bipartite and have self-loop on each node, assume the attention function $\Phi$ is continuous. Then Markov operator $P_t$ is weak ergodic, that is
    \begin{equation}
        \lim\limits_{t\to\infty}\|P_t(u,\cdot)-P_t(v,\cdot)\|_{\mathrm{TV}}=0,\quad \forall u,v\in\mathcal{V},
    \end{equation}
    where $\|\cdot\|_{\mathrm{TV}}$ is \emph{total variation distance}.
\end{theorem}


 From Theorem \ref{thm_nonlinear} we have 
 for all $u,v\in\mathcal{V}$,
\begin{align*}
    \lim\limits_{t\to\infty}h_t(u)&=\lim\limits_{t\to\infty}P_tf(u)=\lim\limits_{t\to\infty}\sum_{w\in\mathcal{V}}P_t(u,w)f(w)\\&=\lim\limits_{t\to\infty}\sum_{w\in\mathcal{V}}P_t(v,w)f(w)=\lim\limits_{t\to\infty}h_t(v).
\end{align*}
This shows that nonlinear diffusion-based GNNs still suffer from oversmoothing.
	\subsection{Ergodicity Breaking Term}\label{sec.4.3}
    We aim to relieve oversmoothing by breaking the ergodicity of the operator. This necessitates a modification of the operator $\mathcal{A}$ as presented in Eq.~\eqref{eq.2}. The central concept underpinning our approach involves introducing an additional term to the operator $\mathcal{A}$, yielding a new operator:
	\begin{equation}\label{eq.extra term}
	\tilde{\mathcal{A}}:=\mathcal{A}+\mathcal{C},
	\end{equation}
	where $ \mathcal{C} $ is an ergodicity-breaking term such that $ \mathcal{C}f\neq0 $ for all $ f $ satisfied $ \mathcal{A}f=0 $. In the simplest case, we can consider $ \mathcal{C}f(u):=c(u)f(u) $, where $ c(u) $ is a non-zero bounded function on $ \mathcal{V} $. 
	{Condition \eqref{eq.extra term} is highly permissive, making it applicable to a wide range of functions.}
	\begin{theorem}\label{theorem.extra term}
		The operator $\tilde{\mathcal{A}}$ is non-ergodic.
	\end{theorem}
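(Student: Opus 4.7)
The plan is to exhibit a non-constant function $f\in\ker\tilde{\mathcal{A}}$, directly witnessing the negation of ergodicity as defined above.

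First, by Theorem~\ref{thm:ergo_graph}, $\ker\mathcal{A}=\mathrm{span}(1)$, so the hypothesis on $\mathcal{C}$ reduces to the single statement $g:=\mathcal{C}(1)\not\equiv 0$. A direct computation rules out nonzero constants from $\ker\tilde{\mathcal{A}}$: $\tilde{\mathcal{A}}(\alpha\cdot 1)=\mathcal{A}(\alpha\cdot 1)+\mathcal{C}(\alpha\cdot 1)=\alpha g\neq 0$ for $\alpha\neq 0$. Hence every nontrivial element of $\ker\tilde{\mathcal{A}}$ is automatically non-constant, and the task reduces to producing \emph{any} nonzero element of $\ker\tilde{\mathcal{A}}$.

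To produce one, I would decompose $C(\mathcal{V})=\mathrm{span}(1)\oplus H_0$ with $H_0=\{h:\sum_u h(u)\mu(u)=0\}$ using the unique invariant measure $\mu$ of $\mathcal{A}$ (existence and uniqueness via Perron--Frobenius, as noted after Theorem~\ref{thm:ergo_graph}). Since $\mathrm{range}(\mathcal{A})=H_0$ is complementary to $\ker\mathcal{A}$, the restriction $\mathcal{A}|_{H_0}$ is a bijection. Parameterize a candidate as $f=\alpha\cdot 1+h$ with $h\in H_0$; then $\tilde{\mathcal{A}}f=0$ splits under the two projections into the pair
\begin{equation*}
\mathcal{A}h+\Pi_0(\mathcal{C}h)=-\alpha\,\Pi_0(g),\qquad \Pi_1(\mathcal{C}h)=-\alpha\,\Pi_1(g),
\end{equation*}
where $\Pi_0,\Pi_1$ are the projections onto $H_0$ and $\mathrm{span}(1)$, respectively. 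Solving the first equation using $(\mathcal{A}|_{H_0})^{-1}$ expresses $h$ affinely in $\alpha$; plugging into the second yields a single scalar equation $\kappa\,\alpha=0$, so a one-parameter family of non-constant kernel elements arises precisely when $\kappa=0$.

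The main obstacle is verifying $\kappa=0$: this is a Fredholm-type compatibility condition that does not follow from $g\neq 0$ alone, so the argument must bring in additional structure of $\mathcal{C}$. In the simple diagonal case $\mathcal{C}f(u)=c(u)f(u)$ featured in the paper, I would apply Perron--Frobenius to the non-negative shifted operator $\mathcal{A}+\mathrm{diag}(c)+\lambda I$ (for sufficiently large $\lambda$) to locate the value $0$ in the spectrum of $\tilde{\mathcal{A}}$, yielding a strictly positive Perron eigenvector that is non-constant exactly because $c(u)$ is non-constant; this eigenvector then serves as the required element of $\ker\tilde{\mathcal{A}}$. Pinning down the precise signs and boundedness assumptions on $c$ that make this Perron argument go through is the technical crux of the proof.
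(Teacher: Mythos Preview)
Your first paragraph is essentially the paper's entire proof. The paper argues by contradiction: take $f_c$ with $\tilde{\mathcal{A}}f_c=0$; if $\tilde{\mathcal{A}}$ were ergodic then $f_c$ would be a constant $b$, but $\tilde{\mathcal{A}}b=\mathcal{A}b+\mathcal{C}b=\mathcal{C}b\neq 0$, contradicting $\tilde{\mathcal{A}}f_c=0$. No attempt is made to exhibit a nonzero element of $\ker\tilde{\mathcal{A}}$; the paper stops exactly where your first paragraph stops.

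Everything after your first paragraph goes beyond what the paper does, and the gap you flag is genuine under the paper's literal definition of ergodicity (``every $f$ with $\mathcal{L}f=0$ is constant''): if $\ker\tilde{\mathcal{A}}=\{0\}$ that condition is vacuously satisfied. However, this gap cannot be closed in the stated generality. Take $\mathcal{C}f=-f$; then $\mathcal{C}b=-b\neq 0$ for every nonzero constant $b$, so the hypothesis on $\mathcal{C}$ holds, yet $\tilde{\mathcal{A}}=\mathcal{A}-I$ has spectrum contained in $(-\infty,-1]$ and hence $\ker\tilde{\mathcal{A}}=\{0\}$. Your Perron--Frobenius plan does not rescue this: applying Perron--Frobenius to $\tilde{\mathcal{A}}+\lambda I$ locates its Perron eigenvalue, not the value $0$ in the spectrum of $\tilde{\mathcal{A}}$; there is no mechanism forcing those to coincide, and in the example above they do not (the Perron eigenvalue of $\tilde{\mathcal{A}}+\lambda I$ is $\lambda-1$, with eigenvector the constant function).

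In short: your opening paragraph already matches the paper's argument; the remainder of your proposal attempts to prove strictly more than the paper claims to prove, and that stronger statement is in fact false without additional hypotheses on $\mathcal{C}$.
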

    Proof for Theorem \ref{theorem.extra term} is furnished in Appendix \ref{appendix.4.5}. By substituting the operator $\mathcal{A}$ with the operator $\tilde{\mathcal{A}}$, we obtain a variation of the diffusion process (\ref{eq.graph diff linear}):
	$\frac{\partial h(u,t)}{\partial t}:=\mathcal{A}h(u,t)+\mathcal{C}h(u,t),$
     serving as a framework for designing GNNs to relieve oversmoothing. 

\section{Probability View for Oversmoothing}\label{sec.5}
	In this section, we aim to zoom into the oversmoothing of diffusion-based GNNs from a probabilistic perspective,
  which provides a probabilistic interpretation of the functional analysis findings in Sec.\ref{sec.4}. Firstly, we establish a connection between the node features $ h(u,t) $ and continuous time Markov chains on the graph. Secondly, we study the oversmoothing of diffusion-based GNNs through the ergodicity of Markov chains. Lastly, we show the inherent relationship between the ergodicity-breaking term and the killing process. 
 \subsection{Markov Chain on Graph with Generator $ \mathcal{A} $}
 

    Given a Markov process denoted as $\xi_t$ characterized by its state space $\mathscr{E}$ and a transition kernel denoted as $p(t,x,\mathrm{d}y)$, we can naturally establish a Markov semigroup as:
	$ P_{t}f(x):=\int_{E}f(y)p(t,x,\mathrm{d}y)=\mathbb{E}^{x}[f(\xi_{t})], $
	where $\mathbb{E}^{x}[\cdot]:=\mathbb{E}[\cdot|\xi_{0}=x]$ represents the expectation operator under the probability measure $p(\cdot,x,\mathrm{d}y)$. The generator of the Markov process $\xi_t$ can be obtained via Equation (\ref{eq.Fokker}). Conversely, in accordance with the Kolmogorov theorem (Theorem 2.11 of \cite{ge2011markov}), there exists a Markov process given its transition function and initial distribution. In this subsection, commencing from the generator $\mathcal{A}$ of graph diffusion, we formulate the corresponding process denoted as $\mathbf{X}$, which serves as a representation for the node features $H(t)$.
 
    Let $(\Omega,\mathscr{F},\mathbb{P})$ be a probability space and $\mathbf{Y}=\{\mathbf{Y}_n\}_{n\in\mathbb{N}}$ be a Markov chain with state space $(\mathcal{V},\mathscr{S})$ such that
	$ \mathbb{P}(\mathbf{Y}_{n+1}=v|\mathbf{Y}_{n}=u)=a(u,v). $
	The \emph{jumping times} are defined as: 
	$$ J_{n}:=\displaystyle\sum_{k=1}^{n}\frac{1}{1-a(\mathbf{Y}_{k-1},\mathbf{Y}_{k-1})}\zeta_{k},\; n=1,2,\ldots $$
	where $ \zeta_{k},\; k=1,2,\ldots,n $ is a sequence of independent exponentially distributed random variables of parameter $ 1 $.
	
	Next, we introduce the right-continuous process $\mathbf{X}: [0,\infty)\times\Omega\rightarrow X$ defined as:
	\begin{equation}\label{eq.3}
	\mathbf{X}_{t}:=\mathbf{Y}_{n},\;\text{if}\;t\in[J_{n},J_{n+1}).
	\end{equation}
     which is generated by $\mathcal{A}$ with the transition function denoted as ${p(t,u,v)=e^{t\mathcal{A}}\mathbf{1}_{v}(u)}{t\ge 0}$, where $\mathbf{1}_{v}(u)$ represents the indicator function.
 
    {In Sec.\ref{sec.4.1}, we have represented the node features $H(t)$ of diffusion-based GNNs in the form of an operator semigroup $\mathbf{P}$. The following theorem demonstrates that the node features can be further expressed as conditional expectations of $\mathbf{X}$.}
    \begin{theorem}\label{thm5.1}
		Let $\mathbf{X}$ defined in (\ref{eq.3}) be a continuous time Markov chain with generator $\mathcal{A}$, then 
		\begin{equation}
		h_{i}(u,t)=\mathbb{E}^{u}[f_i(\mathbf{X}_t)]
		\end{equation}
		is the solution of Cauchy problem (\ref{eq.4}).

	\end{theorem}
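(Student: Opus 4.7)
The plan is to verify that $g_i(u,t) := \mathbb{E}^u[f_i(\mathbf{X}_t)]$ is a solution of the Cauchy problem (\ref{eq.4}), and then appeal to uniqueness, which is implicit in the Fokker--Planck correspondence underlying Theorem \ref{prop:cauchy_solu} (see Remark \ref{remark.1}), to conclude $g_i \equiv h_i$. The initial condition $g_i(u,0) = f_i(u)$ follows from $\mathbf{X}_0 = u$, so only the evolution equation $\partial_t g_i = \mathcal{A} g_i$ requires work.

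For the PDE I would invoke the Markov property of $\mathbf{X}$:
\begin{equation*}
g_i(u,t+s) = \mathbb{E}^u\bigl[\mathbb{E}^{\mathbf{X}_s}[f_i(\mathbf{X}_t)]\bigr] = \mathbb{E}^u\bigl[g_i(\mathbf{X}_s,t)\bigr],
\end{equation*}
and differentiate at $s = 0^+$, which gives $\partial_t g_i(u,t) = \mathcal{L}g_i(u,t)$ for $\mathcal{L}$ the infinitesimal generator of $\mathbf{X}$ in the sense of (\ref{def_generator}). Combined with the identification of $\mathcal{L}$ with $\mathcal{A}$ and with the transition function $p(t,u,v) = e^{t\mathcal{A}}\mathbf{1}_v(u)$ asserted in the paragraph preceding the theorem, this gives directly
\begin{equation*}
g_i(u,t) = \sum_{v \in \mathcal{V}} f_i(v)\, p(t,u,v) = \bigl(e^{t\mathcal{A}}f_i\bigr)(u) = P_t f_i(u),
\end{equation*}
whose right-hand side equals $h_i(u,t)$ by Theorem \ref{prop:cauchy_solu}.

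The step I expect to require the most care is the identification of the generator of $\mathbf{X}$ with $\mathcal{A}$: one should verify directly from the jump-time construction (\ref{eq.3}) that $\lim_{t\downarrow 0} t^{-1}(\mathbb{E}^u[f(\mathbf{X}_t)] - f(u)) = \mathcal{A}f(u)$ for every bounded $f$ on $\mathcal{V}$. A clean way is to condition on the first jump time $J_1$ and destination $\mathbf{Y}_1$ to obtain the renewal-type identity
\begin{equation*}
\mathbb{E}^u[f(\mathbf{X}_t)] = \mathbb{P}^u(J_1>t)\,f(u) + \int_0^t \mathbb{P}^u(J_1 \in ds) \sum_{v\in\mathcal{V}} a(u,v)\,\mathbb{E}^v[f(\mathbf{X}_{t-s})],
\end{equation*}
expand to first order in $t$, and combine the self-transition contributions $a(u,u)$ of the embedded chain $\mathbf{Y}$ with the $1/(1-a(\mathbf{Y}_{k-1},\mathbf{Y}_{k-1}))$ scaling of the holding times so that the net coefficient of $t$ recovers exactly $\mathcal{A}f(u)$. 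Boundedness of $f$ on the finite state space $\mathcal{V}$ then makes the $O(t^2)$ remainder uniform, justifying the interchange of limit and differentiation in the Markov-property argument above.
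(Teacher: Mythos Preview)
Your proposal is correct and follows essentially the same route as the paper: both arguments use the Markov property (Chapman--Kolmogorov) to write $g_i(u,t+r)=\mathbb{E}^u[g_i(\mathbf{X}_r,t)]$ and then differentiate at $r=0^+$ via the definition of the generator to obtain $\partial_t g_i=\mathcal{A}g_i$, with the initial condition immediate. Your additional shortcut through the transition-function identity and Theorem~\ref{prop:cauchy_solu}, and your explicit first-jump verification of the generator identification, go beyond what the paper does (the paper simply asserts that $\mathbf{X}$ has generator $\mathcal{A}$ from the construction), but the underlying argument is the same.
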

   
  Theorem \ref{thm5.1} is a variation of the well-known Feynman-Kac formula in stochastic analysis, which establishes a connection between a class of partial differential equations and stochastic differential equations \cite{freidlin1985functional}. Based on the connection between the node feature $H(t)$ and the Markov chain $\mathbf{X}$, we can analyze the dynamics of diffusion-based GNNs by studying the properties of $\mathbf{X}$.
    
    


\subsection{Limiting Distribution of Node Features}\label{sec.5.2}
Studying the transition function $P(t)$ of continuous-time Markov chains with a finite state space is an important research aspect in the field of Markov chains. The oversmoothing problem pertains to the asymptotic convergence property of node features as the model depth increases. In this subsection, we aim to analyze the dynamics of the node feature $h(u,t)$ by examining the transition function $P(t)$.
	\begin{theorem}\label{thm5.2}
		If $ \mathcal{G} $ is a finite connected graph, then for continuous time Markov chain $\mathbf{X}$ with generator $\mathcal{A}$, there exists a unique limiting probability measure $ \mu $ over $ \mathcal{V} $ such that
		\begin{equation}\label{eq: limit mu}
		      p(t,u,v)\rightarrow \mu(v),\quad t\rightarrow\infty. 
		\end{equation}
	\end{theorem}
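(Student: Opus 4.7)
The plan is to reduce the transition-function convergence stated in (\ref{eq: limit mu}) to the operator-semigroup convergence already established in Theorem \ref{thm.graph_ergodicity_conv}. The key observation is that $p(t,u,v) = P_t \mathbf{1}_v(u)$, where $\mathbf{1}_v$ denotes the indicator function of the vertex $v$; thus the limiting behaviour of $p(t,u,v)$ reduces to the limiting behaviour of $P_t f$ for the specific choice $f = \mathbf{1}_v$.

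Concretely, I would apply Theorem \ref{thm.graph_ergodicity_conv} with $f = \mathbf{1}_v$. This yields
\[
p(t,u,v) = P_t \mathbf{1}_v(u) \longrightarrow \sum_{w \in \mathcal{V}} \mathbf{1}_v(w)\,\mu(w) = \mu(v) \quad \text{as } t \to \infty,
\]
which is exactly the convergence claim, and the limit is independent of the starting state $u$, so $\mu$ serves as the limiting distribution. For uniqueness, any candidate limiting probability measure $\mu'$ must satisfy $\mu'(v) = \lim_{t \to \infty} p(t,u,v)$ for every $v$, forcing $\mu' = \mu$; alternatively, uniqueness descends from uniqueness of the invariant measure of $\mathcal{A}$ on a connected graph, established in Lemma \ref{lemmaA1} via Perron--Frobenius.

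The main obstacle is a slight gap between hypotheses: Theorem \ref{thm.graph_ergodicity_conv} assumes the graph is connected \emph{and} non-bipartite, whereas here only connectedness is posited. This gap is essentially harmless in the continuous-time setting because the exponential holding times of $\mathbf{X}$ preclude any periodic behaviour even on bipartite graphs. Concretely, $P_t = e^{t\mathcal{A}}$ has strictly positive diagonal entries for every $t>0$, so the induced chain cannot oscillate between two parity classes. One can therefore adapt the ergodicity argument behind Theorem \ref{thm:ergo_graph} without the non-bipartite restriction, or alternatively invoke the standard spectral fact that for an irreducible finite-state continuous-time Markov chain the generator has $0$ as a simple eigenvalue with all other eigenvalues lying in the open left half-plane, which immediately yields elementwise convergence $e^{t\mathcal{A}} \to \mathbf{1}\mu^{\top}$ and hence the stated limit.
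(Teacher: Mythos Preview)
Your argument is correct, but it is not the route the paper takes. You reduce Theorem~\ref{thm5.2} to the operator-semigroup convergence of Theorem~\ref{thm.graph_ergodicity_conv} via the identity $p(t,u,v)=P_t\mathbf{1}_v(u)$, and you correctly observe that the non-bipartite hypothesis is immaterial in continuous time (since $e^{t\mathcal{A}}$ always has positive diagonal, or equivalently since all nonzero eigenvalues of $\mathcal{A}$ lie in the open left half-plane regardless of bipartiteness). The paper instead gives a self-contained probabilistic proof: fix $h>0$, observe that the $h$-skeleton $\mathbf{Z}_n:=\mathbf{X}_{nh}$ is an irreducible aperiodic discrete-time chain (aperiodicity coming from $p(h,u,u)>0$), invoke the classical discrete-time ergodic theorem to get $p(nh,u,v)\to\mu(v)$, and then pass from the skeleton to arbitrary $t$ by a uniform-continuity estimate $|p(t+s,u,v)-p(t,u,v)|\le 1-e^{s(a(u,u)-1)}$.

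Your reduction is more economical and makes transparent that Theorem~\ref{thm5.2} is essentially a reformulation of Theorem~\ref{thm.graph_ergodicity_conv}; on the other hand it inherits the self-adjointness (detailed-balance) assumption hidden in the spectral-decomposition proof of Theorem~\ref{thm.graph_ergodicity_conv}, unless you fall back on the general Perron--Frobenius fact you mention at the end. The paper's skeleton-chain argument is longer but is independent of Section~\ref{sec.4}, requires no symmetry or reversibility, and is in keeping with the stated purpose of Section~\ref{sec.5} to give a parallel, purely probabilistic interpretation of the oversmoothing phenomenon.
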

 The limit of probability measure $\mu$ in Eq.\eqref{eq: limit mu} is indeed an invariant measure of the generator $\mathcal{A}$.
 Due to its uniqueness, Theorem \ref{thm5.2} shares the same measure $\mu$ as defined in Eq.\eqref{eq.11}.
 Theorem \ref{thm5.2} further reflects the limiting states of the transition semigroup $\mathbf{P}=\{P_t, t\ge 0\}$ corresponding to the generator $\mathcal{A}$. Combining with Theorem \ref{thm5.1}, we have
	\begin{align*}
		h_{i}(u,t)&=\mathbb{E}^{u}[f_i(\mathbf{X}_t)]=\sum_{v \in \mathcal{V}}f_{i}(v)p(t,u,v)\rightarrow \sum_{v \in \mathcal{V}}X_{i}(v)\mu(v) \;\;
	\end{align*}
 as $t\rightarrow\infty,$ showing that as the Markov chain transitions, each node feature tends to the average over the entire state space $\mathcal{V}$ of the input $X$. The uniqueness of such a limiting distribution stated in Theorem \ref{thm5.2} indicates that each node feature will converge to the same value, providing the probabilistic interpretation to the oversmoothing issue in diffusion-based GNNs.
 
\subsection{Killing Process}\label{sec.5.3}
In Sec. \ref{sec.4.3}, we introduced the ergodicity breaking term $\mathcal{C}$ to mitigate the oversmoothing issue. 
In this subsection, we proceed with the assumption that $\mathcal{C}f(u):=c(u)f(u)$, where $c(u)\leq 0$.  We shall delve into the study of the stochastic process with the generator $\tilde{\mathcal{A}}$ and provide a probabilistic interpretation of the ergodicity-breaking term.

We consider the killing process, which has a state space denoted as $ \tilde{\mathcal{V}}:=\mathcal{V}\cup{\partial} $, derived from $ \mathbf{X} $ as defined in Eq.(\ref{eq.3}):
\begin{equation}\label{eq.}
\tilde{\mathbf{X}}_{t}:= \begin{cases}
\mathbf{X}_{t},\quad t<\tau\\
\partial,\quad t\ge\tau
\end{cases}
\end{equation}
Here, $ \tau $ is a stopping time, and $ \partial $ signifies the \emph{dead state}. We define the function $c(u)$ as:
$
c(u):=-\lim\limits_{t\rightarrow 0^{+}}\frac{1}{t}\mathbb{P}(\tau\leq t|\mathbf{X}_{0}=u)$. The probability $ \mathbb{P}(\tau\leq t|\mathbf{X}_{0}=u) $ represents the likelihood of being terminated, starting from state $ u $ within the time interval $ (0,t] $. Consequently, $ c(u) $ can be comprehended as the negative of the killing rate. The following theorem asserts that the Markov process corresponding to the generator $ \tilde{\mathcal{A}}=\mathcal{A}+c(u) $ indeed matches the killing process $ \tilde{\mathbf{X}} $ defined in Eq. \eqref{eq.}:
\begin{theorem}\label{thm5.3}
The killing process $ \tilde{\mathbf{X}} $ is a Markov process with a generator $\tilde{\mathcal{A}}=\mathcal{A}+c(u) $.
\end{theorem}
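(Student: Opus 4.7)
The plan is to prove Theorem~\ref{thm5.3} in two stages: first establish that $\tilde{\mathbf{X}}$ is a Markov process on the augmented state space $\tilde{\mathcal{V}} = \mathcal{V} \cup \{\partial\}$, and then compute its infinitesimal generator directly from its transition semigroup and verify equality with $\tilde{\mathcal{A}}$. For the Markov property, I would adopt the canonical realization of $\tau$ as the first time that the additive functional $\int_0^t -c(\mathbf{X}_s)\,ds$ exceeds an independent $\mathrm{Exp}(1)$ random variable $\eta$; this realization is consistent with the stated infinitesimal killing rate $-c(u)$ because $\mathbb{P}(\tau \leq t \mid \mathbf{X}_0 = u) = 1 - \mathbb{E}^u\bigl[\exp(\int_0^t c(\mathbf{X}_s)\,ds)\bigr] = -c(u)t + o(t)$. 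Combining the strong Markov property of $\mathbf{X}$ with the memoryless property of $\eta$ (shifted past the current time), the post-$s$ evolution of $\tilde{\mathbf{X}}$ on $\{\tilde{\mathbf{X}}_s = u \in \mathcal{V}\}$ is in distribution a killing process started at $u$ with the same rate, independent of $\mathscr{F}_s$. Since $\partial$ is absorbing, the Markov property on the event $\{\tilde{\mathbf{X}}_s = \partial\}$ is trivial.

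For the generator, extend any $f \in C(\mathcal{V})$ to $\tilde{\mathcal{V}}$ by setting $f(\partial) = 0$ and compute the transition semigroup
\begin{equation*}
\tilde{P}_t f(u) = \mathbb{E}^u[f(\tilde{\mathbf{X}}_t)] = \mathbb{E}^u[f(\mathbf{X}_t)\mathbf{1}_{\tau > t}] = P_t f(u) - \mathbb{E}^u[f(\mathbf{X}_t)\mathbf{1}_{\tau \leq t}].
\end{equation*}
Theorem~\ref{prop:cauchy_solu} together with Eq.~\eqref{eq.Fokker} gives the small-time expansion $P_t f(u) = f(u) + t\mathcal{A}f(u) + o(t)$. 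For the second term, the right-continuity of paths of $\mathbf{X}$ implies $f(\mathbf{X}_t) = f(u) + o(1)$ on $\{\tau \leq t\}$ as $t \to 0^+$, and boundedness of $f$ together with $\mathbb{P}(\tau \leq t \mid \mathbf{X}_0 = u) = -c(u)t + o(t)$ yields
\begin{equation*}
\mathbb{E}^u[f(\mathbf{X}_t)\mathbf{1}_{\tau \leq t}] = f(u)\mathbb{P}(\tau \leq t \mid \mathbf{X}_0 = u) + o(t) = -c(u)f(u)\,t + o(t).
\end{equation*}
Subtracting $f(u)$ from both sides of the display for $\tilde{P}_t f(u)$, dividing by $t$, and taking $t \to 0^+$ yields $\tilde{\mathcal{L}} f(u) = \mathcal{A} f(u) + c(u) f(u)$, which is exactly $\tilde{\mathcal{A}} f(u)$.

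The main obstacle is the fact that the definition of $\tau$ in the paper is given only implicitly through the infinitesimal rate $c(u)$, so a rigorous proof really hinges on pinning down the canonical construction of the stopping time so that the two asymptotic estimates above are actually valid; once this construction is adopted, the Markov property is a standard consequence of the strong Markov property of $\mathbf{X}$ combined with the lack of memory of the independent exponential clock, and the generator computation is an elementary first-order expansion. A secondary subtlety is that $f(\mathbf{X}_t)$ and $\mathbf{1}_{\tau \leq t}$ are not independent under this construction, but this does not matter to leading order because $\mathbb{P}(\tau \leq t \mid \mathbf{X}_0 = u) = O(t)$ and $f(\mathbf{X}_t) - f(u) = o(1)$ uniformly, making the joint correction an $o(t)$ term that is absorbed into the remainder.
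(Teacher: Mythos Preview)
Your proposal is correct and follows the same two-stage outline as the paper, but the generator computation takes a genuinely different route. The paper passes through the identity $\mathbb{E}^{u}[f(\mathbf{X}_t)\mathbf{1}_{\tau>t}] = \mathbb{E}^{u}\bigl[f(\mathbf{X}_t)\exp\bigl(\int_0^t c(\mathbf{X}_s)\,ds\bigr)\bigr]$ and then invokes the Feynman--Kac formula (cited from \O ksendal) as a black box to read off the generator. You instead do an elementary first-order expansion: split $\tilde{P}_t f(u) = P_t f(u) - \mathbb{E}^u[f(\mathbf{X}_t)\mathbf{1}_{\tau\le t}]$, expand each piece to order $t$, and take the limit. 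Your route is more self-contained and avoids importing a theorem stated for diffusions on $\mathbb{R}^n$ into a finite-state setting; the paper's route is shorter once Feynman--Kac is granted. You are also more explicit than the paper about the canonical exponential-clock construction of $\tau$; the paper uses exactly this construction implicitly (it is what justifies the identity above and the Markov-property calculation), so your emphasis on pinning it down is well placed rather than an extra burden. The secondary subtlety you flag about the correlation between $f(\mathbf{X}_t)$ and $\mathbf{1}_{\tau\le t}$ is handled correctly by your $o(t)$ bookkeeping.
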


     \begin{figure*}[htp]
    \centering
    \subcaptionbox{Oversmoothing on Wisconsin Dataset\label{os-wisconsin}}
    [.32\textwidth]{\includegraphics[width=.3\textwidth]{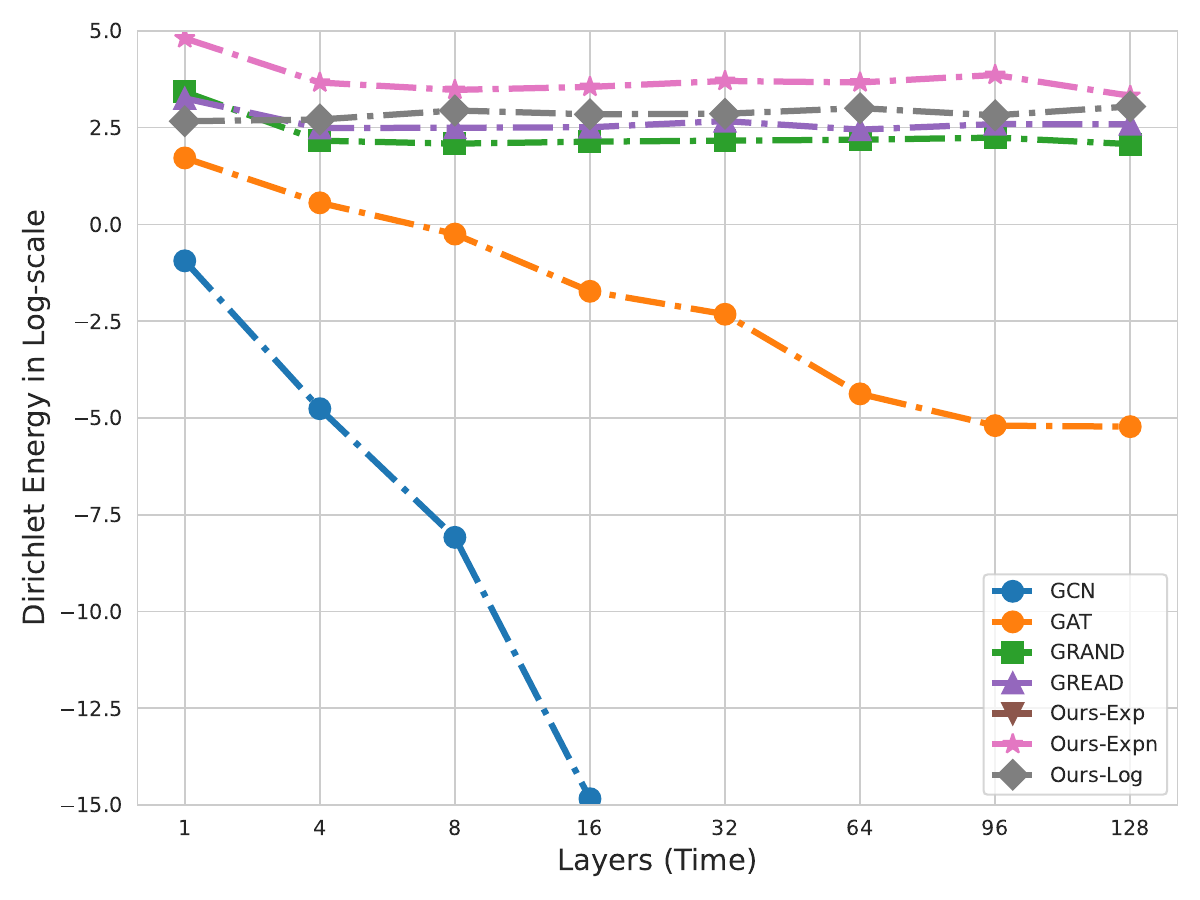}}
    \subcaptionbox{Effects of Homophily on Oversmoothing\label{os-homo}}
    [.32\textwidth]{\includegraphics[width=.3\textwidth]{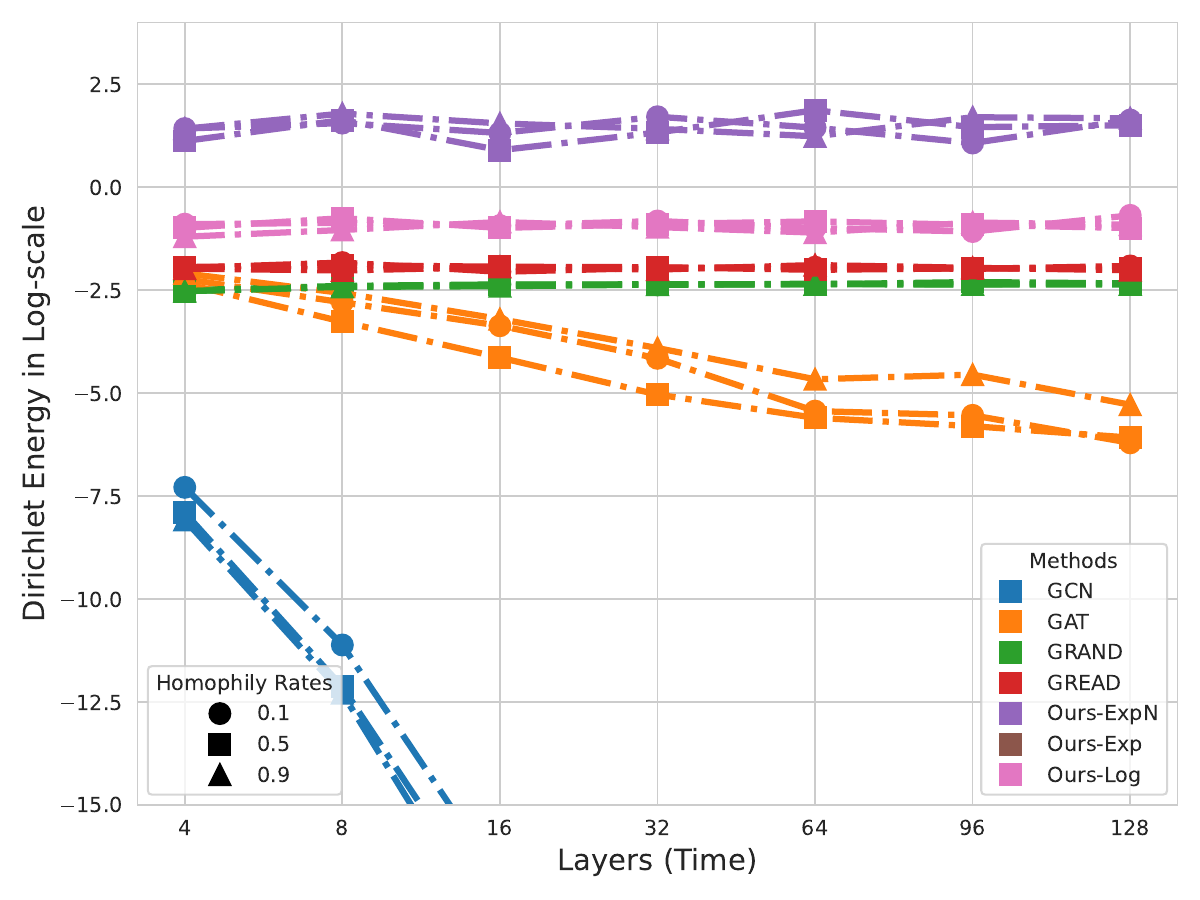}}
    \subcaptionbox{Performance as Layer Increases\label{os-test-prerformance}}
    [.32\textwidth]{\includegraphics[width=.3\textwidth]{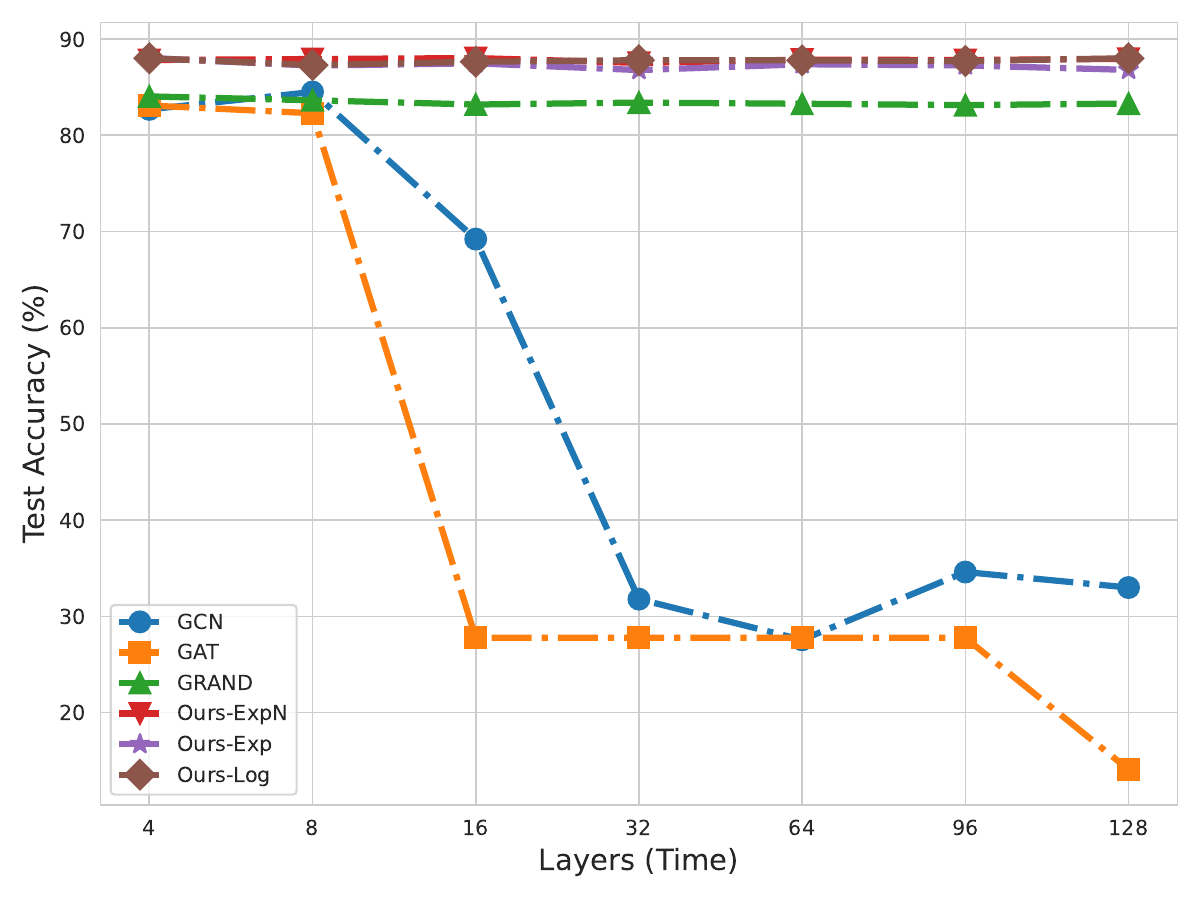}}
    
    \caption{Demonstration on Oversmoothing and Performance Metrics.
    The figure comprises three panels: \ref{os-wisconsin} showcases the increase in oversmoothing of various GNN models with increasing depth on the Wisconsin dataset, with models like GCN and GAT experiencing significant effects; \ref{os-homo} illustrates the impact of synthetic Cora dataset with varying homophily levels on oversmoothing; and \ref{os-test-prerformance} displays the corresponding decrease in test accuracy on Cora dataset as network depth increases, highlighting performance declines in models such as GCN and GAT at greater depths due to the oversmoothing effects noted in the other panels.
    }
    \label{fig:result os}
    \end{figure*}
 
    In reference to Section \ref{sec.5.2}, it has been established that the Markov process denoted as $ \mathbf{X} $, characterized by the generator $ \mathcal{A} $, will ultimately attain ergodicity across the entire state space. The probability measure $\mu(u)$, representing the likelihood of being in state $ u\in\mathcal{V} $, remains unchanged, thereby causing different node features to converge towards a common value, which results in the oversmoothing phenomenon. However, the killing process $ \tilde{\mathbf{X}} $ follows a distinct trajectory, as it transitions into a non-operational state, known as the ``dead state," after an unpredictable duration $\tau$. This behavior prevents it from achieving ergodicity throughout the state space and impedes the convergence of node features. It is worth noting that a similar concept is employed in the gating mechanisms frequently utilized in the field of deep learning. In fact, \citet{rusch2022gradient} has applied this gating mechanism to effectively address the oversmoothing problem.



\section{Connections and Extensions to Existing Works}\label{sec.connect and ext}
In this section, we explore the relationships between this work and the existing literature, highlighting both the continuities and the novelties introduced.
\subsection{Confirmation of Oversmoothing}
In the context of discrete Graph Neural Networks (GNNs), the process of feature propagation can be intuitively likened to a random walk occurring on the graph structure. \citet{zhao2022comprehensive} conducted an extensive investigation into the oversmoothing problem prevalent in classical GNN models like GCN and GAT. This analysis was conducted using discrete-time Markov chains on graphs. Additionally, \citet{thorpe2022grand++} delved into the oversmoothing issue within the GRAND framework, employing discrete-time random walks for their study. 

The probabilistic interpretation presented in section \ref{sec.5} can be seen as a broader extension of the insights derived from the discrete setting. Our research endeavors further delve into continuous-time Markov processes, with a particular emphasis on exploring the intricate relationships between stochastic processes, operator semigroups, and graph diffusion equations.

\subsection{Relief of Oversmoothing} 
Graph diffusion with ergodicity breaking term $ \mathcal{C} $ proposed in Sec. \ref{sec.4.3} is an explanatory framework for many existing diffusion-based GNNs that have been proven effective in addressing the oversmoothing problem. In previous attempts to mitigate the oversmoothing issue in diffusion-based GNNs, researchers have proposed several approaches. \citet{chamberlain2021grand} adds a source term in the graph diffusion equation, while \citet{wang2022acmp} alleviates the issue of oversmoothing by introducing an Allen-Cahn term. Based on the reaction-diffusion equations, \citet{choi2023gread} introduces the reaction term $r(H(t))$, encompassing the forms of the previous two approaches. In comparison to the aforementioned methods \cite{thorpe2022grand++, choi2023gread,wang2022acmp}, our framework offers a more rigorous theoretical foundation and provides the essential condition to deal with oversmoothing, that is, breaking ergodicity. Then the three methods mentioned earlier, which address the oversmoothing issue by incorporating additional terms, can be viewed as specific cases of our approach. 

\subsection{Technical Improvements}

The operator semigroup method serves as a robust tool for analyzing both linear and nonlinear dynamic systems. In this work, we do not endeavor to prove the exponential contraction of a smooth measure~\cite{nguyen2023revisiting,wu2024demystifying}. Instead, utilizing the semigroup method, we conduct a thorough analysis of the dynamics of the graph diffusion equation across varying initial conditions. By employing definitions of ergodicity and invariant measures associated with operators, we establish a specific mathematical expression for the smoothing feature, advancing beyond previous studies that merely confirmed the existence of oversmoothing. Additionally, the operator semigroup framework facilitates our investigation into the quantitative bounds of convergence rates through functional inequalities~\cite{bakry2014analysis} in future work. Furthermore, our approach fosters a deep connection between probability theory and functional analysis via operator semigroups and generators, offering a unified perspective that encompasses many earlier works framed within the context of Markov processes~\cite{thorpe2022grand++}.
\zwc{Graph signal processing (GSP) emerged as one of the pioneering frameworks for elucidating the phenomenon of oversmoothing~\cite{li2018deeper}. Within this framework, the graph convolution operator functions analogously to a low-pass filter, resulting in smoothing of node features. GSP is grounded in spectral graph theory, which emphasizes operators associated with the graph's structure such as the Laplacian. This foundation aligns with operator semigroups, which focus on operators that act on node features. While, semigroup methods show more power in continuous models.}

\section{Experimental Verification}
    This section presents empirical experiments designed to validate the theoretical analysis of the oversmoothing problem in graph diffusion. Our objectives are twofold: \textbf{(1)} Verification of oversmoothing: We empirically demonstrate the oversmoothing phenomenon in certain graph diffusion structures as theorized, alongside testing the proposed method aimed at mitigating this issue in scenarios characterized by differing levels of homophily and test performance; \textbf{(2)} Comparative Performance Analysis: The proposed method is evaluated against standard baselines in node classification benchmarks to illustrate its effectiveness in improving performance.

    \subsection{Specification on Ergodicity Breaking Terms}\label{sec: ebo}
    Following Theorem \ref{theorem.extra term}, we propose the following ergodicity breaking terms $\mathcal{C}$ w.r.t. the elementary function based on the adjacency matrix $A$: 
    (1) Exponential: $e^{A}=\sum_{n=0}^{\infty} \frac{1}{n !} A^{n} $;
        (2) Negative exponential: $e^{-A}=\sum_{n=0}^{\infty} \frac{(-1)^n}{n !} A^{n} $;
        (3) Logarithm: $\log{(I+A)}=\sum_{n=1}^{\infty} \frac{(-1)^{n-1}}{n} A^{n} $;
    In the implementation level, we truncate the series by $N\in\mathbb{N}^+$, resulting breaking terms with different orders.

\begin{table*}[!ht]
    \centering
        \caption{Node-classification results. Top three models are coloured by \textcolor{red}{First}, \textcolor{blue}{Second}, \textcolor{orange}{Third}}
    \label{tab: simple results}
    \resizebox{\textwidth}{!}{%
    \begin{tabular}{l ccccccccc}
    \toprule 
         &
         \textbf{Texas} &  
         \textbf{Wisconsin} & 
         \textbf{Cornell} &
         \textbf{Film} &
         \textbf{Squirrel} &
         \textbf{Chameleon} &
         \textbf{Citeseer} & 
         \textbf{Pubmed} & 
         \textbf{Cora} \\
         
         Hom level &
         \textbf{0.11} &
         \textbf{0.21} & 
         \textbf{0.30} &
         \textbf{0.22} & 
         \textbf{0.22} & 
         \textbf{0.23} &
         \textbf{0.74} &
         \textbf{0.80} &
         \textbf{0.81} \\ 
         
         
         
         \midrule
         
         
         $\text{PairNorm}$ &
         $60.27 \pm 4.34$ &
         $48.43 \pm 6.14$ &
         $58.92 \pm 3.15$ &
         $27.40 \pm 1.24$ & 
         $50.44 \pm 2.04$ & 
         $62.74 \pm 2.82$ &
         $73.59 \pm 1.47$ &
         $87.53 \pm 0.44$ &
         $85.79 \pm 1.01$ \\ 
         
         $\text{GraphSAGE}$  &
         $82.43 \pm 6.14$ &
         $81.18 \pm 5.56$ &
         $75.95 \pm 5.01$ &
         $34.23 \pm 0.99$ & 
         $41.61 \pm 0.74$ & 
         $58.73 \pm 1.68$ &
         $76.04 \pm 1.30$ &
         $88.45 \pm 0.50$ &
         $86.90 \pm 1.04$\\
         
         $\text{GCN}$ &
         $55.14 \pm 5.16$ &
         $51.76 \pm 3.06$ &
         $60.54 \pm 5.30$ &
         $27.32 \pm 1.10$ & 
         $53.43 \pm 2.01$ &
         $64.82 \pm 2.24$ &
         $76.50 \pm 1.36$ &
         $88.42 \pm 0.50$ &
         $86.98 \pm 1.27$ \\ 
         
         $\text{GAT}$ &
         $52.16 \pm 6.63$ &
         $49.41 \pm 4.09$ &
         $61.89 \pm 5.05$ &
         $27.44 \pm 0.89$ & 
         $40.72 \pm 1.55$ &
         $60.26 \pm 2.50$ &
         $76.55 \pm 1.23$ &
         $87.30 \pm 1.10$ & 
         $86.33 \pm 0.48$ \\ 
         

%
         $\text{CGNN}$ &
         $71.35 \pm 4.05$ &
         $74.31 \pm 7.26$ &	
         $66.22 \pm 7.69$ &	
         $35.95 \pm 0.86$ &	
         $29.24 \pm 1.09$ &	
         $46.89 \pm 1.66$ &	
         $\textcolor{black}{76.91 \pm 1.81}$ &	
         $87.70 \pm 0.49$ &	
         $87.10 \pm 1.35$ \\
        
        $\text{GRAND}$ &
        $75.68 \pm 7.25$ &
        $79.41 \pm 3.64$ &
        $74.59 \pm 4.04$ &
        $35.62 \pm 1.01$ &
        $40.05 \pm 1.50$ &
        $54.67 \pm 2.54$ &
        $76.46 \pm 1.77$ &
        $89.02 \pm 0.51$ &
        $87.36 \pm 0.96$ \\
         

        GRAFF & 
         $ 84.32\pm 5.51 $ &
         $ \textcolor{black}{85.68 \pm 3.40} $ &
         $ 71.08 \pm 3.64 $ &
         $ 35.55 \pm 1.20 $ &
         $ \textcolor{blue}{57.60 \pm 1.42} $ & 
         $ \textcolor{black}{68.75 \pm 1.83} $ & 
         $ {75.27 \pm 1.45} $ &
         $ {89.60 \pm 0.40} $ &
         $ {87.66 \pm 0.97}  $\\



        GREAD & 
         $ \textcolor{orange}{84.59 \pm 4.53} $ &
         $ 85.29 \pm 4.49 $ & 
         $ {73.78\pm 4.53} $ & 
         $ \textcolor{blue}{37.59 \pm 1.06} $ &
         $ \textcolor{red}{58.62 \pm 1.08} $ & 

        $ \textcolor{blue}{70.00 \pm 1.70} $ & 
         $  \textcolor{blue}{77.09 \pm 1.78} $ &
         $ \textcolor{orange}{90.01 \pm 0.42}$ &
         $ \textcolor{blue}{88.16 \pm 0.81}$\\
         \midrule

        Ours-Expn & 
         $ \textcolor{red}{88.65 \pm 2.64} $ &
         $ \textcolor{red}{88.82 \pm 3.62} $ & 
         $ \textcolor{orange}{76.48 \pm 2.71} $ & 
         $ \textcolor{red}{37.79 \pm 0.86} $ &
         $ \textcolor{black}{50.83 \pm 1.69} $ & 
        $ \textcolor{orange}{69.74 \pm 1.26} $ & 
         $ \textcolor{red}{77.48 \pm 1.26} $ &
         $ \textcolor{red}{90.08 \pm 0.49}$ &
         $ \textcolor{red}{88.31 \pm 0.85}$\\
        
        Ours-Exp & 
         $ \textcolor{black}{84.06 \pm 4.59} $ &
         $ \textcolor{blue}{87.64 \pm 3.51} $ & 
         $ \textcolor{blue}{77.57 \pm 3.83} $ & 
         $ \textcolor{black}{37.37 \pm 1.17} $ &
         $ \textcolor{black}{50.03 \pm 1.90} $ & 
        $ \textcolor{black}{69.41 \pm 1.15} $ & 
         $ \textcolor{black}{76.75 \pm 1.76} $ &
         $ \textcolor{black}{89.92 \pm 0.37}$ &
         $ \textcolor{black}{87.97 \pm 1.09}$\\

        Ours-Log & 
         $ \textcolor{blue}{87.03 \pm 5.24} $ &
         $ \textcolor{orange}{86.67 \pm 2.60} $ & 
         $ \textcolor{red}{78.11 \pm 3.90} $ & 
         $ \textcolor{orange}{37.55 \pm 0.80} $ &
         $ \textcolor{orange}{54.75 \pm 1.57} $ & 
        $ \textcolor{red}{70.75 \pm 1.39} $ & 
         $ \textcolor{orange}{77.07 \pm 1.60} $ &
         $ \textcolor{blue}{90.02 \pm 0.38}$ &
         $ \textcolor{orange}{88.07 \pm 1.24}$\\

         \bottomrule
         \bottomrule
    \end{tabular}
    }
\end{table*}

\begin{table}[h]
\centering
\caption{Further node-classification results. The best results are highlighted in bold.}
\label{tab: res2}
\resizebox{.45\textwidth}{!}{%
\begin{tabular}{l c c c c}
\toprule
\cmidrule{2-5}
 & Computers & Photo & CoauthorCS & ogbn-arxiv \\
\midrule
GRAND & $69.12\pm 0.41$ & $84.64\pm 0.25$ & $92.89\pm 0.30$ & $91.27\pm 0.17$ \\
GRAND++ & $68.66\pm 0.63$ & $84.99\pm 0.52$ & $92.89\pm 0.23$ & $91.51\pm 0.23$ \\
ACMP & $69.87\pm 0.33$ & $85.30\pm 0.66$ & $92.97\pm 0.41$ & $91.23\pm 0.36$ \\
GREAD & $69.89\pm 0.45$ & $85.81\pm 0.63$ & $92.93\pm 0.13$ & $90.93\pm 0.31$ \\
Ours-Exp & $70.35\pm 0.24$ & $85.49\pm 0.98$ & $93.71\pm 0.23$ & $91.86\pm 0.20$ \\
Ours-Expn & $\mathbf{70.85\pm 0.24}$ & $\mathbf{86.16\pm 0.54}$ & $\mathbf{94.03\pm 0.31}$ & $91.59\pm 0.32$ \\
Ours-Log & $70.68\pm 0.69$ & $86.02\pm 0.31$ & $93.75\pm 0.18$ & $\mathbf{92.04\pm 0.30}$ \\
\bottomrule
\end{tabular}
}
\vspace{-3mm}
\end{table}

    \subsection{Oversmoothing Validation}    

    We first empirically validate the oversmoothing phenomenon as posited in our theoretical analysis. Dirichlet energy \citep{MaskeyPBK23} is utilized as the primary metric for this purpose, defining as:
    $
     E\left(H, A\right)=\frac{1}{N} \sum_{u \in \mathcal{V}} \sum_{v \in \mathcal{N}(u)} a(u,v)\left\|\frac{h_{u}}{\sqrt{d_{u}}}-\frac{h_{v}}{\sqrt{d_{v}}}\right\|^{2},
    $
    where $d_u$ is the degree of node $u$.
    We adopt the synthetic Cora \cite{zhu2020beyond} dataset for our experiments. 
   
    To contextualize our findings, we compare the following models: GCN \cite{kipf2016semi}, GAT \cite{velivckovic2017graph}, GRAND \cite{chamberlain2021grand}, GREAD \cite{choi2023gread} and ergodicity-breaking terms in Sec. \ref{sec: ebo}, denoting as Ours-Exp, Ours-ExpN, and Ours-Log. Detailed configurations are in Appendix \ref{appendix. os}.

    The experimental findings delve into several critical aspects of oversmoothing issues that have been less emphasized in prior research:\\
    \noindent\textbf{Question: What's the impact of varying levels of homophily on oversmoothing?} \\
    \noindent\textit{Answer:}  
    Figure \ref{os-homo} illustrates the evolution of Dirichlet energy as the number of layers increases across different levels of homophily on synthetic Cora dataset. The results indicate that certain models exhibit greater sensitivity to changes in homophily than others. For instance, models such as GAT and GCN demonstrate a pronounced responsiveness to variations in homophily levels, suggesting that their performance in mitigating oversmoothing is considerably affected by these changes. In contrast, specific models within the "Ours" series, characterized by closely aligned lines, exhibit enhanced stability against oversmoothing across varying levels of homophily. This observation underscores the notion that while some GNN models maintain robustness against fluctuations in homophily, others may necessitate careful consideration of the homophily characteristics inherent in the dataset to achieve optimal performance.

    \noindent\textbf{Question: How does the performance change with increasing network depth?} \\
    \noindent\textit{Answer:} Figure \ref{os-test-prerformance} displays the test accuracy of different GNN models as a function of the number of layers on Cora dataset. The performance trends indicate a distinct variance among the models. For GCN and GAT, there is a marked decrease in performance as the number of layers increases, indicating a susceptibility to performance degradation due to oversmoothing. This effect is particularly pronounced beyond 32 layers where the test accuracy sharply declines.
    In contrast, for graph diffusion based models, performance remains relatively stable and high across all layer depths, suggesting these models are resilient to the deep layer effects, including potential oversmoothing issues.
    What's more, the "Ours" series methods achieve better performance than GRAND across the increasing layers, highlighting their effectiveness in handling deeper architectures.

    \subsection{Node Classification}\label{ncexp} 
    \noindent\textbf{Datasets.} We evaluate the performance of our proposed method in comparison with existing GNN architectures. The focus is on both heterophilic and homophilic graph datasets to showcase the model's versatility and robustness across diverse real-world scenarios. For heterophilic datasets, we utilize six datasets known for their low homophily ratios as identified in \cite{pei2020geom}, including Chameleon and Squirrel \cite{rozemberczki2021multi}, Film \cite{tang2009social}, Texas, Wisconsin, and Cornell datasets from the WebKB collection. For     homophilic datasets, we employ Cora \cite{mccallum2000automating}, CiteSeer \cite{sen2008collective} and PubMed \cite{yang2016revisiting}.
    For data splits, we adopt the methodology from \cite{pei2020geom}, ensuring consistency and comparability in our evaluations. The performance is gauged in terms of accuracy, with both mean and standard deviation reported. Each experiment is conducted over 10 fixed train/validation/test splits to ensure the reliability and reproducibility of results.
    In addition, following the settings in GRAND \cite{chamberlain2021grand}, additional datasets include the coauthor graph CoauthorCS \cite{mcauley2015image}, the Amazon co-purchasing graphs Computer and Photo \cite{mcauley2015image}, and the OGB Arxiv dataset \cite{hu2020open}.
    
    \noindent\textbf{Baselines.} Our model is benchmarked against a comprehensive set of GNN architectures, encompassing both traditional models: PairNorm \cite{pairnorm}, GCN \cite{kipf2016semi}, GAT \cite{velivckovic2017graph} and GraphSage \cite{hamilton2017inductive}, and recent ODE-based models: Continuous Graph Neural Networks (CGNN) \cite{xhonneux2020continuous},  GRAND \cite{chamberlain2021grand}, 
    GRAFF \cite{di2022graph} and GREAD \cite{choi2023gread}.

    \noindent\textbf{Hardwares.} All experiments reported in this work were conducted within a uniform computational environment to ensure reproducibility and consistency of results. The specifications of the environment are as follows:
    Operating System: Ubuntu 18.04 LTS;
    Programming Language: Python 3.10.4;
    Deep Learning Framework: PyTorch 2.0.1;
    Graph Neural Network Library: PyTorch Geometric 2.4.0;
    Differential Equation Solver: TorchDiffEq 0.2.3;
    GPU Computing: CUDA 11.7;
    Processor: AMD EPYC 7542 32-Core Processor;
    Graphics Card: NVIDIA RTX 3090.

    \noindent\textbf{Our Method} We use the ergodicity breaking terms in Sec. \ref{sec: ebo} with specific truncated orders $N$ and different ODE block, detailed optimal configurations are in Appendix \ref{appendix. nct}.

    \noindent\textbf{Results.} The data presented in Table \ref{tab: simple results} illuminates the comparative efficacy of diverse graph neural network models when applied to datasets characterized by varying degrees of homophily. Notably, the diffusion-based models, augmented with our proposed ergodicity breaking terms, demonstrate remarkable robustness and adaptability, emerging as the top-performing models on 8 out of 9 datasets. Furthermore, they consistently secure the top-three performance overall. Particularly noteworthy is their exceptional performance on datasets with low homophily levels, such as Wisconsin and Cornell, as well as on the Pubmed dataset, which exhibits a high level of homophily. For the additional results presented in Table \ref{tab: res2}, 'Ours-Expn' achieves the highest accuracy on the Computers, Photo, and CoauthorCS datasets, while 'Ours-Log' achieves the highest accuracy on the ogbn-arxiv dataset. These findings indicate that the proposed breaking term is effective in enhancing performance for node classification tasks.
    
\subsection{Discussion on Killing Process}
    In this part, we provide a detailed discussion of the killing process introduced in Section \ref{sec.5.3}. According to Theorem \ref{thm5.3}, the killing process follows the ODE dynamics described as: $\frac{\partial H(t)}{\partial t}= [A(H(t))-I]H(t)-c(H(t))H(t)$, from which the following observations can be made: 
    (1) Unlike the breaking term introduced in Section \ref{sec.4.3}, here $c(H(t))$ is negative (element-wise in the vector sense) and can depend solely on the node features $H(t)$, independent of the graph structure; (2) From the proof provided in Appendix \ref{sec.B3}, the node feature under the killing process can be expressed as $h(X_t)=\mathbb{E}^u[f(X_t)e^{-\int_0^tc(X_s)\mathbf{d}s}]$. This highlights that the parameter $c(H(t))$ in the killing process directly controls the convergence rate of the node features. 
    To more clearly illustrate the impact of the design of $c(H(t))$ on oversmoothing, we consider a special case where c is a constant. In this scenario, the node features are given by $h(X_t)=\mathbb{E}^u[f(X_t)e^{-ct}]$. Under this setting, nodes undergo oversmoothing at an exponential rate, with the rate controlled by the magnitude of $c$. We employ the log-scale Dirichlet energy to represent the extent of oversmoothing in diffusion-based GNNs as the number of layers increases. The results are presented in Table \ref{tab: killing process}.
\begin{table}[h]
\centering
\caption{Dirichlet energy in log scale as layers increase under different $c$ values.}
\label{tab: killing process}
\resizebox{.45\textwidth}{!}{%
\begin{tabular}{l c c c c c c c}
\toprule
 & \multicolumn{7}{c}{Layers} \\
\cmidrule{2-8}
$$c$$ & 1 & 10 & 20 & 60 & 80 & 90 & 100 \\
\midrule
1e-3      & 4.66 & -0.80 & -2.15 & -4.39 & -5.09 & -5.39 & -5.66 \\
1e-2      & 4.66 & -0.96 & -2.49 & -5.47 & -6.52 & -7.00 & -7.46 \\
1e-1      & 4.66 & -2.60 & -5.95 & -16.19 & -20.73 & -21.06 & -24.78 \\
5e-1      & 4.66 & -9.87 & -21.23 & -21.33 & -21.96 & -26.70 & -23.96 \\
1         & 4.66 & -18.96 & -25.00 & -22.24 & -21.43 & -23.00 & -21.63 \\
\bottomrule
\end{tabular}
}
\end{table}
It can be observed that as $c$ increases, the rate of oversmoothing accelerates significantly. In practice, the gating mechanism introduced in \cite{rusch2022gradient} can be regarded as a learning-based approach to determine $c(H(t))$, thereby aiming to alleviate the problem of oversmoothing. 
We speculate that designing different forms of $c(H(t))$ can lead to varying effects both in theory and practice.
Our work aims solely to establish this connection, while the more theoretical design of $c(H(t))$ is left as future work.

\subsection{Effects of Truncated Orders $N$}
For two datasets with distinct homophily ratios, Texas and Cora, we present in Table \ref{tab: ablation N} the impact of different Truncated Orders on the performance of the node classification task.
Results reveal that the performance is significantly impacted by the interplay between homophily and $N$. In low-homophily settings (Texas), performance is more sensitive to $N$, with Ours-Expn exhibiting superior robustness, likely due to its ability to filter out less relevant information from multi-hop aggregation. Conversely, in high-homophily settings (Cora), performance is less sensitive to changes in $N$, as immediate neighborhood information is often sufficient for classification, with all models exhibiting similar performance across values of $N$. These findings highlight the necessity of tailoring the truncation order $N$ to the dataset's homophily level to optimize GNN performance.
\begin{table}[h]
\centering
\caption{Node Classification Results w.r.t. Truncated Orders}
\label{tab: ablation N}
\resizebox{.45\textwidth}{!}{%
\begin{tabular}{l *{6}{c}}
\toprule
 & \multicolumn{3}{c}{Texas} & \multicolumn{3}{c}{Cora} \\
\cmidrule(lr){2-4} \cmidrule(lr){5-7}
Truncated Orders & N=1 & N=2 & N=3 & N=1 & N=2 & N=3 \\
\midrule
Ours-Exp & 84.06 & 79.73 & 80.27 & 87.04 & 87.97 & 87.30 \\
Ours-Expn & 88.65 & 85.41 & 87.03 & 88.31 & 88.07 & 87.85 \\
Ours-Log & 86.49 & 87.03 & 85.95 & 87.40 & 88.07 & 87.44 \\
\bottomrule
\end{tabular}
}
\end{table}

\section{Conclusions and Limitations}
This paper presents a unified framework rooted in operator semigroup theory to address oversmoothing in diffusion-based GNNs. Grounded in this framework, the introduced versatile ergodicity-breaking condition can incorporate previous research and offer universal guidance for countering oversmoothing. Empirical validation demonstrates its effectiveness in enhancing node classification performance. Our probabilistic interpretation establishes a vital connection with existing literature and enhances the comprehensiveness of our approach. 

While we discuss the nonlinear graph diffusion operator and show that attention-based nonlinear graph diffusion still suffers from oversmoothing, quantitative analysis in the continuous case is still difficult. 
In our future work, we aim to investigate the specific designs of the breaking term and the killing process to better understand their precise effects on node-level features, and will explore oversmoothing in the nonlinear case using advanced mathematical tools such as nonlinear operator semigroups.
Moreover, our experimental results primarily focus on node classification tasks, which are well-suited for evaluating whether a model is influenced by oversmoothing. Exploring the impact of the oversmoothing problem on other types of tasks, such as graph-level tasks, could serve as a promising direction for future research.


\section*{Acknowledgments}
This work is supported by National Key R\&D Program of China (No.2021YFA1000403), National Natural Science Foundation of China (No.12401666, 11991022, U23B2012, 12326611) and the Fundamental Research Funds for the Central Universities, Nankai University (No.054-63241437).
\bibliographystyle{ACM-Reference-Format}
\bibliography{ref}
\appendix
\section{Proofs for Sec. \ref{sec.3} and \ref{sec.4}} \label{appendix.proofs}
\subsection{Proof of Proposition \ref{prop.1}}\label{appendix.3.2}
    Semigroup $P_t$ induced by the operator $\mathcal{A}$ satisfies:
         $\frac{\partial P_t}{\partial t} = \mathcal{A}P_t = P_t\mathcal{A},$
         then it can be expressed as:
        $
         P_t=e^{t \mathcal{A}}.
         $
         For $f: \mathcal{V} \rightarrow \mathcal{R}^n$, we define $Q=A-I$, then $P_t f=e^{t \mathcal{A}} f= e^{tQ} f$. Since A is normalized, $Q=(q_{u v})_{u, v \in \mathcal{V}}$ is a Q-matrix satisfying the following conditions:
             (i)
              $0 \leq-q_{i i}<\infty$ for all $i$;
             (ii)
             $q_{i j} \geq 0$ for all $i \neq j$;
             (iii)
              $\sum_{j \in I} q_{i j}=0$ for all $i$.
         We state a lemma without proof, which describes the relationship between $Q$ and operator $P_t$:
         \begin{lemma}[Theorem 2.1.2 of \cite{norris1998markov} ]\label{lemma.Q matrix}
             A matrix $Q$ on a finite set is a $Q$-matrix if and only if $P_t=e^{t Q}$ is a stochastic matrix for all $t \geq 0$.
         \end{lemma}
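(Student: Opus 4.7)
The plan is to establish the biconditional by proving each direction separately, working directly from the power-series expansion $e^{tQ} = \sum_{n\ge 0} \frac{t^n}{n!} Q^n$ and using that on a finite set this converges absolutely for every $t$, so entrywise manipulations are justified.

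For the forward direction, assume $Q$ is a $Q$-matrix. To see that each row of $e^{tQ}$ sums to $1$, observe that condition (iii) says $Q\mathbf{1}=0$, hence $Q^n\mathbf{1}=0$ for every $n\ge 1$, so $e^{tQ}\mathbf{1} = \mathbf{1} + \sum_{n\ge 1}\frac{t^n}{n!}Q^n\mathbf{1} = \mathbf{1}$. For nonnegativity of entries, I would introduce the auxiliary matrix $B := Q + \lambda I$, where $\lambda := \max_i (-q_{ii}) < \infty$ by condition (i). Conditions (i) and (ii) then guarantee that every entry of $B$ is nonnegative, whence every power $B^n$ is entrywise nonnegative and so $e^{tB} = \sum_{n\ge 0}\frac{t^n}{n!}B^n$ has nonnegative entries for $t\ge 0$. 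Since $\lambda I$ commutes with $B$, we may factor $e^{tQ} = e^{-t\lambda}\, e^{tB}$, and the scalar prefactor $e^{-t\lambda}>0$ preserves nonnegativity. Combined with the row-sum identity, this shows $P_t = e^{tQ}$ is a stochastic matrix.

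For the converse, assume $P_t = e^{tQ}$ is stochastic for every $t\ge 0$. Recover $Q$ as $Q = \lim_{t\to 0^+}\frac{P_t - I}{t}$, which is legitimate since the series expansion gives $P_t = I + tQ + O(t^2)$ entrywise. I would then read off the three $Q$-matrix conditions in turn. For (iii), the row-sum condition $P_t\mathbf{1}=\mathbf{1}$ yields $(P_t-I)\mathbf{1}=0$ for all $t$; dividing by $t$ and letting $t\to 0^+$ gives $Q\mathbf{1}=0$. For (ii), note that for $i\neq j$ we have $(P_0)_{ij}=0$ and $(P_t)_{ij}\ge 0$, so $q_{ij} = \lim_{t\to 0^+}\frac{(P_t)_{ij}}{t}\ge 0$. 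For (i), $P_t$ stochastic implies $(P_t)_{ii}\le 1$, so $(P_t-I)_{ii}/t \le 0$; taking the limit gives $q_{ii}\le 0$, and finiteness of $-q_{ii}$ is automatic because $Q$ itself is a fixed finite matrix.

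The main obstacle, and the only step that is more than routine, is the nonnegativity of $e^{tQ}$ in the forward direction: the naive series has mixed signs because $q_{ii}<0$, so one cannot conclude termwise. The shift trick $Q = -\lambda I + B$ is the essential device, and it hinges on condition (i) being a uniform bound so that a single $\lambda$ works for all rows — trivial on a finite set but worth flagging as the point where finiteness is used. The rest consists of standard manipulations of the matrix exponential and differentiation at $t=0$.
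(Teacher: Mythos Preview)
Your proof is correct and is in fact the standard argument (essentially the one in Norris). However, the paper does not supply its own proof of this lemma: it explicitly says ``We state a lemma without proof'' and cites Theorem~2.1.2 of \cite{norris1998markov}. So there is nothing in the paper to compare against beyond noting that your argument matches the cited source.
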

         Lemma \ref{lemma.Q matrix} demonstrated that $P_t$ is a stochastic matrix adhere to: $0 \leq p_{i j}<\infty$ , $\sum_{j \in I} p_{i j}=1 $ for all $i$, leading to (i) and (ii). 


\subsection{Proof of Theorem \ref{thm:ergo_graph}}\label{appendix.4.3}
    \begin{lemma}\label{lemmaA1}
            Measure $\mu$ with respect to operator $\mathcal{A}$ is an positive eigenvector of the transpose of $A=(a(u,v))_{u,v \in \mathcal{V}}$ with eigenvalue 1. 
    \end{lemma}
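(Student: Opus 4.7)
The plan is to translate the invariance condition $\sum_{u\in\mathcal{V}}\mathcal{A}f(u)\mu(u)=0$ into a linear algebra statement about the matrix $A$ by direct manipulation, and then invoke Perron--Frobenius (as the paper already cites) to obtain positivity.

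First I would unfold the definition of $\mathcal{A}$ inside the invariance identity. Writing
\begin{equation*}
\sum_{u\in\mathcal{V}}\mathcal{A}f(u)\,\mu(u)=\sum_{u\in\mathcal{V}}\sum_{v\in\mathcal{V}} a(u,v)\bigl(f(v)-f(u)\bigr)\mu(u),
\end{equation*}
I would split this into two double sums, swap the order of summation in the first, and then use the row-normalization $\sum_{v}a(u,v)=1$ in the second. This yields
\begin{equation*}
\sum_{v\in\mathcal{V}} f(v)\Bigl(\sum_{u\in\mathcal{V}} a(u,v)\mu(u)\Bigr) - \sum_{u\in\mathcal{V}} f(u)\mu(u)=0.
\end{equation*}

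Next, since this identity must hold for \emph{every} bounded function $f$, I would specialize $f$ to the indicator functions $\mathbf{1}_{\{v\}}$ ranging over $v\in\mathcal{V}$. Each choice isolates one coordinate and forces
\begin{equation*}
\sum_{u\in\mathcal{V}} a(u,v)\mu(u)=\mu(v),\qquad\forall v\in\mathcal{V},
\end{equation*}
which is exactly the componentwise statement of $A^{\top}\mu=\mu$. Hence $\mu$ is an eigenvector of $A^{\top}$ with eigenvalue $1$.

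For the positivity claim, I would invoke the Perron--Frobenius theorem (as the excerpt already points to Theorem 8.4.4 of Horn--Johnson). Because $\mathcal{G}$ is connected, $A$ (and thus $A^{\top}$) is irreducible; being row-stochastic, its spectral radius is $1$. The Perron--Frobenius theorem then guarantees that the eigenspace associated with eigenvalue $1$ is one-dimensional and admits a strictly positive eigenvector. Since $\mu$ lies in this eigenspace, it must be a positive multiple of the Perron eigenvector, and in particular $\mu(u)>0$ for every $u\in\mathcal{V}$.

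The only subtle point, and the main obstacle I anticipate, is justifying that it suffices to test the invariance condition against indicator functions rather than all bounded measurable $f$; this is immediate on a finite node set because indicators span $C(\mathcal{V})$, but it is the step that ties the functional-analytic definition of invariance to the matrix equation. Once that is cleanly stated, the rest is a short appeal to Perron--Frobenius and the connectedness hypothesis inherited from Theorem~\ref{thm:ergo_graph}.
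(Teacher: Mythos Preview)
Your proposal is correct. The core algebraic manipulation---expanding the double sum, invoking the row normalization $\sum_v a(u,v)=1$, and swapping the order of summation---is exactly the computation in the paper's proof. The difference is purely in logical direction: the paper \emph{assumes} $A^{\top}\mu=\mu$ and then verifies that $\sum_{u}\mathcal{A}f(u)\mu(u)=0$ for every $f$ (eigenvector $\Rightarrow$ invariant), whereas you start from the invariance identity and deduce $A^{\top}\mu=\mu$ by specializing to indicators (invariant $\Rightarrow$ eigenvector). Both arguments then appeal to Perron--Frobenius for positivity. Your direction matches the literal wording of the lemma and is the implication one needs for \emph{uniqueness} of the invariant measure; the paper's direction is the implication one needs to \emph{construct} an invariant measure out of the Perron eigenvector. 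The extra step you single out---passing from all bounded $f$ to indicator functions---is precisely the additional ingredient your direction requires, and your justification (indicators span $C(\mathcal{V})$ on a finite vertex set) is correct.
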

    
    \begin{proof}
        Suppose $\mu$ is an eigenvector of $A^{T}$ with eigenvalue 1, then $ A^{T} \mu=\mu $ and  $\mu(v)=\sum_{u} a(u,v) \mu(u) $,
                    \begin{align*}
            \sum_{u \in \mathcal{V}} \mathcal{A} f(u) \mu(u) 
            &= \sum_{u} \sum_{v} a(u,v)(f(v)-f(u)) \mu(u) \\
            &=\sum_v \sum_u a(u,v) \mu(u) f(v) - \sum_u f(u) \mu(u) \\
            &=\sum_v f(v) \mu(v) - \sum_u f(u) \mu(u)=0 .
        \end{align*}

        This proves $\mu$ is invariant with respect to operator $\mathcal{A}$. And positivity of $\mu$ is ensured by the Perron-Frobenius theorem.
    \end{proof}
    

We first consider operator $ \mathcal{A} $ define in Eq.\eqref{eq.2} is self-adjoint. 
To be specific, $ \langle \mathcal{A}f, g \rangle= \langle f, \mathcal{A}g \rangle$, for every $f, g\in C(\mathcal{V})$, where inner product $ \langle f,g \rangle:=\int_{\mathcal{V}}fg \mathrm{d}\mu $.

	\begin{proof} [Proof of Theorem \ref{thm:ergo_graph}]
		Since $ \mathcal{G} $ is a connected graph, then $ A=(a(u,v))_{u,v\in\mathcal{V}} $ is irreducible, that is, for every pair of vertices $ (u,v)\in\mathcal{V}\times\mathcal{V} $, there exists a path $ (u=u_{0},u_{1},\ldots,u_{k-1},u_{k}=v) $ in $ \mathcal{V} $, such that for $ i=0,1,\ldots,k-1 $, $ a(u_{i},u_{i+1})>0 $. Equivalently, $ A $ is not similar via a permutation to a block upper triangular matrix. 
		
		From Perron-Frobenius theorem for irreducible matrices, we know that the maximal eigenvalue $ \lambda_{0}=0 $ of $ \mathcal{A} $ is unique (algebra multiplicity is $ 1 $) and the geometric multiplicity of $ \lambda_{0} $ is $ 1 $, that is, the corresponding normalized eigenfunction of $ \lambda_{0} $ is unique, noted as $ f $. Consider
		\begin{align*}
			\langle \mathcal{A}f,f\rangle=\sum_{u\in\mathcal{V}}&\left[\sum_{v\in\mathcal{V}}a(u,v)(f(v)-f(u))\right]f(u)\mu(u)\\
			=\frac{1}{2}[\sum_{u\in\mathcal{V}}&\sum_{v\in\mathcal{V}}a(u,v)(f(v)-f(u))f(u)\mu(u)\\&+\sum_{v\in\mathcal{V}}\sum_{u\in\mathcal{V}}a(v,u)(f(u)-f(v))f(v)\mu(v)]\\
			=-\frac{1}{2}&\sum_{u\in\mathcal{V}}\sum_{v\in\mathcal{V}}a(u,v)(f(u)-f(v))^{2}\mu(u),
		\end{align*}
		$ f $ is the corresponding normalized eigenfunction of $ \lambda_{0}=0 $, that is, $ \mathcal{A}f=0 $, then $ \langle \mathcal{A}f,f\rangle = 0 $. Therefore
		 $f(u)=f(v):=c,\;\forall u,v\in\mathcal{V}. $
		Notice the uniqueness of $ f $, operator $ \mathcal{A} $ is ergodic.
	\end{proof}

		

 \subsection{Proof of Theorem \ref{thm.graph_ergodicity_conv}}\label{appendix.4.4}
 \begin{proof}
			Since operator $ \mathcal{A} $ is self-adjoint, consider the spectral decomposition of the generator $\mathcal{A}$
			$$ \mathcal{A}f=\sum_{k=0}^{N}\lambda_{k}E_{\lambda_{k}}f=\sum_{k=0}^{N}\lambda_{k}f_{\lambda_{k}}e_{\lambda_{k}}, $$
			where $ \lambda_{k},k=0,1,\ldots,N $ are the eigenvalues of $ \mathcal{A} $, $ e_{\lambda_{k}} $ are corresponding normalized eigenfunctions of $ \lambda_{k} $, $ E_{\lambda_{k}} $ are corresponding projection operators of $ \lambda_{k} $, and $f_{\lambda_k}$ is the projection of $f$ related to $e_{\lambda_k}$ . For $ t\ge 0 $,
			$ P_{t}f=e^{t\mathcal{A}}f=\sum_{k=0}^{N}e^{\lambda_{k}t}f_{\lambda_{k}}e_{\lambda_{k}}. $
			Since graph $ \mathcal{G} $ is connected, Perron-Frobenius theorem implies that eigenvalues of $ \mathcal{A} $ satisfy $ 0=\lambda_{0}>\lambda_{1},\ldots,\lambda_{N} $. As $ t\rightarrow\infty $,
			$ P_tf(u)\rightarrow f_{\lambda_{0}}e_{\lambda_{0}}. $
			Since operator $ \mathcal{A} $ is ergodic, $ f_{\lambda_{0}}e_{\lambda_{0}} $ is a constant. Notice 
			$$ \mathcal{A}(\sum_{u\in \mathcal{V}}f(u)\mu(u))=\sum_{u\in \mathcal{V}}\mathcal{A}f(u)\mu(u)=0, $$
			therefore $ f_{\lambda_{0}}e_{\lambda_{0}}=\sum_{u\in \mathcal{V}}f(u)\mu(u). $
		\end{proof}
Since $ P_{t}f $ converges to the eigenspace corresponding to the eigenvalue $ \lambda_{0}=0 $, once the generator is not ergodic, the projection of function $ f\in\mathscr{L}^{2}(\mathcal{V},\mu) $ onto the eigenspace corresponding to the eigenvalue $ 0 $ is not constant. Therefore, $ \lim\limits_{t\rightarrow\infty}P_{t}f $ is not constant. That means not oversmoothing.

 If $ A $ is not symmetric, while for every pair of  vertices $ (u,v)\in\mathcal{V}\times\mathcal{V} $, $ \mu(u)a(u,v)=\mu(v)a(v,u), $ which is so called detailed balance condition, we say $ \mu $ is reversible. We can simply symmetrize $ A $ by $ \tilde{a}(u,v):=\sqrt{\mu(u)}a(u,v)\frac{1}{\sqrt{\mu(v)}}. $ Consider the symmertrized operator $ \tilde{\mathcal{A}}$ satisfying : $\tilde{\mathcal{A}} f(u)=\sum_{v\in\mathcal{V}}\tilde{a}(u,v)(f(v)-f(u)), $ we can obtain similar conclusions.
 
	 

\subsection{Proofs for Theorem \ref{thm_nonlinear}}\label{sec.nonlinear} 
    \begin{proof}

    For simplicity, we denote the attention matrix $A(H(t))$ defined in \eqref{attention_matrix} as $A^{(t)}$, satisfies
    $$
    \begin{cases}
        A^{(t)}(u,v)=\frac{e^{b^{(t)}_{u,v}}}{\sum_{w \in \mathcal{N}(u)} e^{b^{(t)}_{u,w}}} & \text{if} (u,v) \in \mathcal{E}, \\
        A^{(t)}(u,v)=0 & \text{if} (u,v) \notin \mathcal{E}.
    \end{cases}
    $$
    where $b^{(t)}_{u,v}= \Phi(h_t(u), h_t(v)),\;(u,v) \in \mathcal{E}$. Through the definition of $H_{t+1}$, we have
    $
    \|H_{t+1}\|_{max} \leq  \|A^{(t)} A^{(t-1)}... A^{(0)} \|_{max} \|H_0\|_{max}
    $, 
    where $\| \cdot \|_{max}$ denotes the matrix max norm, i.e., let $E=(e_{ij}) \in \mathbb{R}^{m \times n}$, $\|E\|_{max}=\max\limits_{i, j} e_{ij}$. Since $A^{(t)}(u,v) \in [0,1]$ for all $t$ and all $(u,v)$, 
    $\|H_{t+1}\|_{max} \leq \|H_0\|_{max}$
    Hence there exist $C_0 \geq 0$ such that for $t \in \mathbb{N}$,
    $
    \|H_{t+1}\|_{max} \leq C_0.
    $
    The continuous assumption of attention function $\Phi$ implies there exist $C \geq 0$ such that $0 \leq b^{(t)}_{u,v} \leq C$ for all $t$, we thus obtain that
    $$
    A^{(t)}(u,v)=\frac{e^{b^{(t)}_{u,v}}}{\sum_{w \in \mathcal{N}(u)} e^{b^{(t)}_{u,w}}} \geq \frac{1}{Ne^c},
    $$
    for all $(u,v) \in \mathcal{E}$.
    Hence there exist $\epsilon >0$ such that $A^{(t)}(u,v) \geq \epsilon$ for all $t \in \mathbb{N}$ and $(u,v) \in \mathcal{E}$, impling that $A^{(t)}(u,v)$ satisfies the 
    Doeblin's condition \cite{saloff2009merging}, yielding 
    $$
    max\{\|P_t(u,\cdot)-P_t(v,\cdot)\|_{TV}\} \leq (1-\epsilon)^t \rightarrow 0,\quad t \rightarrow \infty
    $$
    completing the proof.
\end{proof}

    From Theorem \ref{thm_nonlinear} we can derive another quantitative lower bounds for $P_t$ using mathematical Induction. To be specific, we aim at proving the following statement: for all $t \geq 0$ and all $(u,v) \in \mathcal{E}$, $P_{t}(u,v) \geq \epsilon^{t+1}$. When $t=0$, $P_0=A^{(0)}$, it is obvious to see the statement holds. We assume the statement holds at step $t-1$, implying that $P_{t-1}(u,v) \geq \epsilon^{t} $. The self-loop assumption indicate that
    $$
    P_{t}(u,v) \geq A^{(t)}(u,u) P_{t-1}(u,v) 
    $$
    Since $A^{(t)}(u,v) \geq \epsilon$, we have $
    P_{t}(u,v) \geq \epsilon^{t+1}, \text{for all (u,v)}
    $. 

 \subsection{Proof of Theorem \ref{theorem.extra term}}\label{appendix.4.5}
	\begin{proof}
		Let $ f_{c}\in\mathscr{L}^{2}(\mathcal{V},\mu) $ such that 
		$$ (\mathcal{A}+\mathcal{C})f_{c}(u)=\sum_{v\in\mathcal{V}}a(u,v)(f_{c}(v)-f_{c}(u))+\mathcal{C}f_{c}(u)=0. $$
		We assume that $ \mathcal{A}+\mathcal{C} $ is ergodic, that is, $ f_{c} $ is a constant. Let $ f_{c}(u)=b $, for all $ u\in\mathcal{V} $. Then
		$$ (\mathcal{A}+c)f_{c}(u)=\sum_{v\in\mathcal{V}}a(u,v)(b-b)+\mathcal{C}b=\mathcal{C}b. $$
		Since $ \mathcal{A}b=\sum_{v\in\mathcal{V}}a(u,v)(b-b)=0 $, $ \mathcal{C}b\neq 0 $. It is contradict to $ (\mathcal{A}+\mathcal{C})f_{c}(u)=0 $. Therefore $ \mathcal{A}+\mathcal{C} $ is not ergodic.
		
	\end{proof}

 \section{Proofs for Sec. \ref{sec.5}} \label{appendix.5}

 \subsection{Proof of Theorem \ref{thm5.1}}\label{sec.B1}
	\begin{proof}
		From the definition of generator,
		\begin{align*}
			\mathcal{A}h_{i}(u,t)
			&=\lim\limits_{r\rightarrow 0^{+}}\frac{\mathbb{E}^{u}[h_{i}(\mathbf{X}_{r},t)]-h_{i}(u,t)}{r}\\
			&=\lim\limits_{r\rightarrow 0^{+}}\frac{\mathbb{E}^{u}[\mathbb{E}^{\mathbf{X}_{r}}[f_i(\mathbf{X}_t)]]-\mathbb{E}^{u}[f_i(\mathbf{X}_t)]}{r}.
		\end{align*}
		Let $ \{\mathscr{F}_{t}\} $ be the right continuous $ \sigma $-algebra filtration with respect to which $ \mathbf{X}_t $ is adapted. From Chapman–Kolmogorov equation,
		$$ \mathbb{E}^{\mathbf{X}_{r}}[f_i(\mathbf{X}_t)]=\mathbb{E}^{u}[f_i(\mathbf{X}_{t+r})|\mathscr{F}_{r}]. $$
		Then,
		\begin{align*}
		\mathcal{A}h_{i}(u,t)&=\lim\limits_{r\rightarrow 0^{+}}\frac{\mathbb{E}^{u}[\mathbb{E}^{u}[f_i(\mathbf{X}_{t+r})|\mathscr{F}_{r}]]-\mathbb{E}^{u}[f_i(\mathbf{X}_t)]}{r}\\
		&=\lim\limits_{r\rightarrow 0^{+}}\frac{\mathbb{E}^{u}[f_i(\mathbf{X}_{t+r})]-\mathbb{E}^{u}[f_i(\mathbf{X}_t)]}{r}\\
		&=\lim\limits_{r\rightarrow 0^{+}}\frac{h_{i}(u,t+r)-h_{i}(u,t)}{r}=\frac{\partial h_{i}(u,t)}{\partial t}.
		\end{align*}
		Further, $ \mathbb{E}^{u}[f_i(\mathbf{X}_0)]=f_{i}(u)=h_{i}(u,0) $. Therefore, $ \mathbb{E}^{u}[f_i(\mathbf{X}_t)] $ is the solution of Cauchy problem (\eqref{eq.4}).
	\end{proof}

\subsection{Proof of Theorem \ref{thm5.2}}\label{sec.B2}
\begin{proof}
	Fix $ h > 0 $ and consider the h-skeleton $ \mathbf{Z}_{n}:=\mathbf{X}_{nh} $ of $\mathbf{X}  $. Since
	$ \mathbb{P}(\mathbf{Z}_{n+1}=u_{n+1}|\mathbf{Z}_{0}=u_{0},\ldots,\mathbf{Z}_{n}=u_{n})=p(h,u_{n},u_{n+1}), $
	h-skeleton chain $ \mathbf{Z}_{n} $ is discrete time Markov chain with transition matrix $ P(h) $. Since $ \mathcal{G} $ is a finite connected graph, for all $ u,v\in\mathcal{V} $, there exists some $ t $, such that $ p(t,u,v)>0 $. Therefore  $ \mathbf{Z}_{n} $ is irreducible and aperiodic. From classic result of discrete time Markov chain\cite{norris1998markov}, there exists a unique invariant measure $ \mu $, for all $ u,v\in\mathcal{V} $
	$$ p(nh,u,v)\rightarrow \mu(v) ,\quad n\rightarrow\infty. $$
	For fixed state $ u $, 
	\begin{align*}
		&|p(t+h,u,v)-p(t,u,v)|\\&=|\sum_{k \in \mathcal{V}}p(h,u,k)p(t,k,v)-p(t,u,v)|\\
		&=|\sum_{k \neq u}p(h,u,k)p(t,k,v)-(1-p(h,u,u))p(t,u,v)|\\
		&\leq 1-p(h,u,u)\leq \mathbb{P}(J_{1}\leq h|\mathbf{X}_{0}=u)=1-e^{h\cdot(a(u,u)-1)}.
	\end{align*}
	so given $ \epsilon>0 $ we can find $ h>0 $ such that
	$$ |p(t+h,u,v)-p(t,u,v)|\leq 1-e^{s\cdot(a(u,u)-1)}< \frac{\epsilon}{2},\quad 0\leq s\leq h $$
	and find $ N\in\mathbb{N} $ such that
	$ |p(nh,u,v)-\mu(v)|< \frac{\epsilon}{2}, n>N. $
	For $ t>Nh $ we have $ nh \leq t < (n + 1)h $ for some $ n > N $ and
	$$ |p(t,u,v)-\mu(v)|\leq|p(t,u,v)-p(nh,u,v)|+|p(nh,u,v)-\mu(v)|<\epsilon. $$
	Hence
	$p(t,u,v)\rightarrow \mu(v),\quad t\rightarrow\infty. $
\end{proof}

\subsection{Proof of Theorem \ref{thm5.3}}\label{sec.B3}
\begin{proof}
	We first show that $ \tilde{\mathbf{X}} $ is a Markov process. For all $ u\in\mathcal{V} $,
	\begin{align*}
		&\mathbb{P}(\tilde{\mathbf{X}}_{t+s}=u\;|\;\mathscr{F}_{s})=\mathbb{P}(\mathbf{X}_{t+s}=u,\tau>t+s\;|\;\mathscr{F}_{s})\\
		&=\mathbb{P}(\mathbf{X}_{t+s}=u,\tau>t+s\;|\;\mathbf{X}_{s})\mathbf{1}_{\tilde{\mathbf{X}}_{s}\neq\partial}=\mathbb{P}(\tilde{\mathbf{X}}_{t+s}=u\;|\;\tilde{\mathbf{X}}_{s})\mathbf{1}_{\tilde{\mathbf{X}}_{s}\neq\partial}.
	\end{align*}
	On the other hand, 
	\begin{align*}
	&\mathbb{P}(\tilde{\mathbf{X}}_{t+s}=\partial\;|\;\mathscr{F}_{s})=\mathbb{P}(\tau\leq t+s\;|\;\mathscr{F}_{s})\\
	&=\mathbb{P}(\tau\leq s\;|\;\mathscr{F}_{s})+\mathbb{P}(s<\tau\leq t+s\;|\;\mathscr{F}_{s})\\
	&=\mathbf{1}_{\tau\leq s}+\mathbb{P}(s<\tau\leq t+s\;|\;\mathscr{F}_{s})\\
	&=\mathbf{1}_{\tilde{\mathbf{X}}_{s}=\partial}+\mathbb{P}(s<\tau\leq t+s\;|\;\tilde{\mathbf{X}}_{s})\mathbf{1}_{\tilde{\mathbf{X}}_{s}\neq\partial}\\
	&=\mathbf{1}_{\tilde{\mathbf{X}}_{s}=\partial}+\mathbb{P}(\tilde{\mathbf{X}}_{t+s}=\partial\;|\;\tilde{\mathbf{X}}_{s})\mathbf{1}_{\tilde{\mathbf{X}}_{s}\neq\partial}=\mathbb{P}(\tilde{\mathbf{X}}_{t+s}=\partial\;|\;\tilde{\mathbf{X}}_{s}).
	\end{align*}
	Thus $ \tilde{\mathbf{X}} $ is a Markov process. Moreover, since $ \mathcal{V} $ is finite, $ \tilde{\mathbf{X}} $ is a strong Markov process. We next calculate the generators of $ \tilde{\mathbf{X}} $. We need the following lemma:
	\begin{lemma}[The Feynman-Kac formula, Theorem 8.2.1 of \citet{oksendal2013stochastic}]\label{Feynman-Kac}
	Let $ f\in C_{0}^{2}(\mathbb{R}^{n}) $ and $ q\in C(\mathbb{R}^{n}) $. Assume that $ q $ is lower bounded. Put
	$ v(t,x)=\mathbb{E}^{x}\left[\exp\left(-\int_{0}^{t}q(\xi_{s})\mathrm{d}s\right)f(\xi_{t})\right]. $
	Then
	$ \frac{\partial v}{\partial t}=Av-qv,\quad v(0,x)=f(x),\quad t>0,\;x\in\mathbb{R}^{n}, $
	where $ A $ is generator of diffusion process $ \xi $.
	\end{lemma}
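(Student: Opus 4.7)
The plan is to verify the Cauchy problem $\partial_t v = Av - qv$ with $v(0,\cdot)=f$ by combining the Markov property of $\xi$ with the defining relation of its infinitesimal generator $A$. First I would check the initial condition, which is immediate: $v(0,x) = \mathbb{E}^x[e^{0} f(\xi_0)] = f(x)$ since $f \in C_0^2(\mathbb{R}^n)$ is bounded. The lower bound on $q$ ensures $\exp(-\int_0^t q(\xi_s)\,ds) \leq e^{\|q_-\|_\infty t}$, so $v$ is well-defined and bounded on $[0,T]\times\mathbb{R}^n$ for each $T$.

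The main step is to establish a \emph{multiplicative semigroup identity}. Splitting the integral and using the Markov property at time $h$,
$$v(t+h, x) = \mathbb{E}^x\!\left[\exp\!\left(-\int_0^h q(\xi_s)\,ds\right)\mathbb{E}^{\xi_h}\!\left[\exp\!\left(-\int_0^t q(\xi_s)\,ds\right) f(\xi_t)\right]\right] = \mathbb{E}^x\!\left[e^{-\int_0^h q(\xi_s)\,ds}\,v(t,\xi_h)\right].$$
Subtracting $v(t,x)$ and splitting the increment yields
$$\frac{v(t+h,x)-v(t,x)}{h} = \frac{1}{h}\mathbb{E}^x\!\left[v(t,\xi_h) - v(t,x)\right] + \frac{1}{h}\mathbb{E}^x\!\left[\bigl(e^{-\int_0^h q(\xi_s)\,ds}-1\bigr)v(t,\xi_h)\right].$$
The first term converges to $Av(t,x)$ directly from the definition of the generator $A$ in Eq.~\eqref{def_generator}. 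For the second, use the expansion $e^{-\int_0^h q(\xi_s)\,ds} - 1 = -\int_0^h q(\xi_s)\,ds + O\!\bigl((\int_0^h q(\xi_s)\,ds)^2\bigr)$ combined with right-continuity of $\xi$ and continuity of $q$ to argue that $\tfrac{1}{h}\int_0^h q(\xi_s)\,ds \to q(x)$ as $h\to 0^+$ starting from $\xi_0=x$, and that $v(t,\xi_h) \to v(t,x)$. The limit of the second term is therefore $-q(x)v(t,x)$, giving the desired PDE.

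The main obstacle will be justifying the limit/expectation interchanges rigorously. Dominated convergence for the first piece requires $v(t,\cdot)$ to lie in the domain of the strong generator $A$ on $C_0^2$, which is standard for diffusion semigroups but not free; it typically follows from the smoothing/mapping properties of the killed semigroup and the assumption $f\in C_0^2$. For the second piece, uniform bounds on the exponential weight (from the lower bound on $q$) plus boundedness of $v$ (inherited from $f\in C_0^2$) provide the dominating function, while the quadratic remainder in the Taylor expansion contributes $O(h)$ after taking expectations and hence vanishes in the limit. Once these technical points are settled, combining the two contributions yields $\partial_t v(t,x) = Av(t,x) - q(x)v(t,x)$, completing the proof. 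An alternative, equally viable route I would consider as a backup is the martingale/Itô approach: apply Itô's formula to $M_s := v(t-s,\xi_s)\exp(-\int_0^s q(\xi_r)\,dr)$ on $[0,t]$ and show that its drift vanishes iff $v$ solves the stated PDE, then extract the conclusion by taking expectations of $M_t$ against $M_0$.
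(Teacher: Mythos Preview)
The paper does not actually prove this lemma: it is stated as Lemma~\ref{Feynman-Kac} inside the proof of Theorem~\ref{thm5.3} and is simply quoted verbatim from \citet{oksendal2013stochastic}, Theorem~8.2.1, with no accompanying argument. So there is no ``paper's own proof'' to compare against.

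That said, your proposal is a sound outline of the standard semigroup proof of Feynman--Kac: the multiplicative identity $v(t+h,x)=\mathbb{E}^x[e^{-\int_0^h q(\xi_s)\,ds}\,v(t,\xi_h)]$ is correct, and your two-term decomposition cleanly isolates the $Av$ and $-qv$ contributions. The caveats you flag (that $v(t,\cdot)$ must lie in the domain of $A$, and that the exchange of limits needs dominated convergence via the lower bound on $q$) are exactly the nontrivial points. For what it's worth, the proof in \citet{oksendal2013stochastic} that the paper cites actually takes your backup route: it applies It\^o's formula to the product $Z_t f(\xi_t)$ with $Z_t=\exp(-\int_0^t q(\xi_s)\,ds)$, identifies the drift, and takes expectations using Dynkin's formula. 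That approach bypasses the domain question for $v(t,\cdot)$ because it works directly with $f\in C_0^2$, at the cost of requiring It\^o calculus machinery. Your primary route is more elementary and closer in spirit to the operator-semigroup framework the rest of the paper uses, but would need a bit more care to be fully rigorous at the regularity step.
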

	For bounded continuous functions $ f $,
	$$ \mathbb{E}^{u}[f(\tilde{\mathbf{X}}_{t})]=\mathbb{E}^{u}[f(\mathbf{X}_{t})\cdot\mathbf{1}_{[0,\tau)}(t)]=\mathbb{E}^{u}[f(\mathbf{X}_{t})\cdot e^{\int_{0}^{t}c(\mathbf{X}_{s})\mathrm{d}s}], $$
	where $ c(u)=-\lim\limits_{t\rightarrow 0^{+}}\frac{1}{t}\mathbb{P}(\tau\leq t|\mathbf{X}_{0}=u) $. Then, by lemma \ref{Feynman-Kac},
	$$ \lim_{t\rightarrow 0^{+}}\frac{\mathbb{E}^{u}[f(\tilde{\mathbf{X}}_{t})]-f(u)}{t}=\mathcal{A}f(u)+c(u)f(u). $$
	 That is, the generator of $ \tilde{\mathbf{X}} $ is
	$ \tilde{\mathcal{A}}=\mathcal{A}+c(u). $
\end{proof}

\section{Experimental Details} \label{appendix.exp}
 \subsection{Configurations for Oversmoothing}\label{appendix. os}
 \textbf{Synthetic Cora Dataset.} The Synthetic Cora dataset, as introduced by \citet{zhu2020beyond}, is a variant of the well-known Cora dataset tailored for graph neural network (GNN) experiments. This dataset is characterized by its homophily index, which ranges from 0 to 1. 

 \noindent\textbf{Configurations For Figure 1.} For all ergodicity-breaking terms, we use a first-order truncation for ergodicity-breaking terms within our proposed models: Ours-Exp, Ours-ExpN and Ours-Log. 

  \subsection{Configurations for Node Classification Tasks}\label{appendix. nct}
    All experiments reported in this work were conducted within a uniform computational environment to ensure reproducibility and consistency of results. The specifications of the environment are as follows:
    Operating System: Ubuntu 18.04 LTS;
    Programming Language: Python 3.10.4;
    Deep Learning Framework: PyTorch 2.0.1;
    Graph Neural Network Library: PyTorch Geometric 2.4.0;
    Differential Equation Solver: TorchDiffEq 0.2.3;
    GPU Computing: CUDA 11.7;
    Processor: AMD EPYC 7542 32-Core Processor;
    Graphics Card: NVIDIA RTX 3090.

    The optimal hyperparameters for each term were meticulously tuned based on the settings in GREAD \cite{choi2023gread}.
    The ergodicity-breaking terms introduced in Sec. \ref{ncexp} were configured with specific truncated orders N. 
    Empirically, we observe that the hyperparameters specified in GREAD demonstrate robust experimental performance. 
    The code repository for the paper is available at: \href{https://github.com/LOGO-CUHKSZ/SGOS}{https://github.com/LOGO-CUHKSZ/SGOS}

\end{document}